\def\alt{0} %set 1 for alt version, and 0 for arxiv version
\newtheorem{lem}{Lemma}[section]
\newtheorem{theorem}{Theorem}
\newtheorem{remark}[theorem]{Remark}
\newtheorem{thm}[lem]{Theorem}
\newtheorem{infthm}[lem]{Informal Theorem}
\newtheorem{cor}[lem]{Corollary}
\newtheorem{problem}[lem]{Problem}
\newtheorem{defn}[lem]{Definition}
\newtheorem{fact}[lem]{Fact}
\newtheorem{assumption}[lem]{Assumption}
\newtheorem{claim}[lem]{Claim}
\newtheorem{lem}[theorem]{Lemma} 
\newtheorem{thm}[theorem]{Theorem}
\newtheorem{defn}[theorem]{Definition}
\newtheorem{claim}[theorem]{Claim}
\renewcommand{\paragraph}[1]{\vspace{3pt}\noindent\textbf{#1}}
\newcommand{\cO}{\ensuremath{\mathcal{O}}}
\newcommand{\cX}{\ensuremath{\mathcal{X}}}
\newcommand{\cY}{\ensuremath{\mathcal{Y}}}
\newcommand{\hS}{\widehat S}
\newcommand{\hdel}{\hat{\delta}}
\newcommand{\heps}{\hat{\epsilon}}
\newcommand{\hy}{\hat{y}}
\newcommand{\hx}{\hat{x}}
\newcommand{\hh}{\hat{h}}
\newcommand{\tlh}{\tilde{h}}
\newcommand{\relb}{\mathsf{Relabel}}
\newcommand{\tx}{\tilde{x}}
\newcommand{\ty}{\tilde{y}}
\newcommand{\Tpub}{T_\mathsf{pub}}
\newcommand{\hatm}{\hat{m}}
\newcommand{\tT}{\tilde{T}}
\newcommand{\tH}{\tilde{\mathcal{H}}}
\newcommand{\pac}{\mathsf{PAC}}
\newcommand{\prv}{\mathsf{priv}}
\newcommand{\cGTri}{\mathcal{G}_{\cH_\Delta}}
\newcommand{\prvcl}{\mathsf{PrivClass}}
\newcommand{\Agprv}{\mathsf{AgPrivCl}}
\newcommand{\uniprv}{\mathsf{UnvPrivCl}}
\newcommand{\hpv}{h_{\mathsf{priv}}}
\newcommand{\erm}{\mathsf{ERM}}
\newcommand{\vc}{\mathsf{VC}}
\newcommand{\pcqr}{\mathsf{PCQR}}
\newcommand{\stab}{\mathsf{stab}}
\DeclareMathOperator*{\argmin}{arg\,min}
\newcommand{\pr}[2]{\underset{#1}{\mathbb{P}}\left[ #2 \right]}
\newcommand{\ex}[2]{\underset{#1}{\mathbb{E}}\left[ #2 \right]}
\def\polylog{\operatorname{polylog}}
\newcommand{\eps}{\epsilon}
\newcommand{\cA}{\mathcal{A}}
\newcommand{\cD}{\mathcal{D}}
\newcommand{\cG}{\mathcal{G}}
\newcommand{\cH}{\mathcal{H}}
\newcommand{\ind}{{\mathbf{1}}}
\newcommand{\cF}{\mathcal{F}}
\newcommand{\dis}{\mathsf{dis}}
\newcommand{\samp}{\mathsf{SubSamp}}
\newcommand{\pub}{\mathsf{pub}}
\newcommand{\hw}{\hat{w}}
\newcommand{\R}{\mathcal{R}}
\newcommand{\cB}{\mathcal{B}}
\newcommand{\cQ}{\mathcal{Q}}
\newcommand{\ignore}[1]{}
\newcommand{\cR}{\mathcal{R}}
\newcommand{\A}{\mathcal{A}}
\newcommand{\bc}{\mathbf{c}}
\newcommand{\whh}{\widehat{h}}
\newcommand{\wtH}{\widetilde{\mathcal{H}}}
\newcommand{\err}{\mathsf{err}}
\newcommand{\herr}{\widehat{\mathsf{err}}}
\newcommand{\dist}{{\sf dist}}
\newcommand{\sspp}{{\sf SSPP}}
\newcommand{\thr}{\Gamma}
\newcommand{\ds}{{\sf dist}\,}
\newcommand{\wds}{\widehat{\ds}}
\begin{document}
\ifnum\alt=1
% Heading arguments are {volume}{year}{pages}{date submitted}{date published}{paper id}{author-full-names}

%\jmlrheading{?}{?}{?}{?}{?}{?}{Anupama Nandi and Raef Bassily}

\ShortHeadings{Privately Answering Classification Queries in the Agnostic PAC Model}{Nandi and Bassily}
\firstpageno{1}

	\title{Privately Answering Classification Queries in the Agnostic PAC Model}
	
	\author{\name Anupama Nandi \thanks{Part of this work was done while the authors were visiting Simons Institute for the Theory of Computing. Research supported by NSF Awards AF-1908281, SHF-1907715, Google Faculty Research Award, and OSU faculty start-up support.} \email nandi.10@osu.edu \\
		\addr Department of Computer Science \& Engineering\\
		The Ohio State University
		\AND
		\name  Raef Bassily \footnotemark[1]  \email bassily.1@osu.edu \\
		\addr Department of Computer Science \& Engineering\\
		The Ohio State University}
	
	\editor{}
\else

\title{Privately Answering Classification Queries in the Agnostic PAC Model}
\author{Anupama Nandi \thanks{Part of this work was done while the authors were visiting Simons Institute for the Theory of Computing. Research supported by NSF Awards AF-1908281, SHF-1907715, Google Faculty Research Award, and OSU faculty start-up support.} \qquad \qquad  Raef Bassily \footnotemark[1] 
\\
\\ Department of Computer Science \& Engineering \\The Ohio State University}
\date{}
%\begin{document}

\fi

	\maketitle

\begin{abstract}%
%V3:
    We revisit the problem of differentially private release of classification queries. In this problem, the goal is to design an algorithm that can accurately answer a sequence of classification queries based on a private training set while ensuring differential privacy. %with respect to the training set.% for the entire sequence of queries. 
    We formally study this problem in the  agnostic PAC model and derive a new upper bound on the private sample complexity. Our results improve over those obtained in a recent work \ifthenelse{\alt=1}{\citep{bassily2018model}}{\cite{bassily2018model}} for the agnostic PAC setting. In particular, we give an improved construction that yields a tighter upper bound on the sample complexity. Moreover, unlike \ifthenelse{\alt=1}{\citep{bassily2018model}}{\cite{bassily2018model}}, our accuracy guarantee does not involve any blow-up in the approximation error associated with the given hypothesis class. 
    
    Given any hypothesis class with VC-dimension $d$, we show that our construction can privately answer up to $m$ classification queries with average excess error $\alpha$ using a private sample of size $\approx \frac{d}{\alpha^2}\,\max\left(1, \sqrt{m}\,\alpha^{3/2}\right)$ (assuming the privacy parameter $\eps=\Theta(1)$). Using recent results on private learning with auxiliary public data, we extend our construction to show that one can privately answer \emph{any} number of classification queries with average excess error $\alpha$ using a private sample of size $\approx \frac{d}{\alpha^2}\,\max\left(1, \sqrt{d}\,\alpha\right)$. When $\alpha=O\left(\frac{1}{\sqrt{d}}\right)$ and the privacy parameter $\eps=\Theta(1)$, our private sample complexity bound is essentially optimal. 

\end{abstract}
\ifnum\alt=1
\begin{keywords}
	Differential privacy, agnostic PAC model, classification queries.
\end{keywords}
\fi

\section{Introduction}\label{sec:intro}

%\annote{Check references}

In this paper, we revisit the problem of answering a sequence of classification queries in the \emph{agnostic} PAC model under the constraint of $(\eps, \delta)$-differential privacy. An algorithm for this problem is given a \emph{private} training dataset $S=\{(x_1, y_1), \ldots, (x_n, y_n)\}$ of $n$ i.i.d. binary-labeled examples drawn from some unknown distribution $\cD$ over $\cX\times \cY$, where $\cX$ denotes an arbitrary data domain (space of feature-vectors) and $\cY$ denotes a set of binary labels (e.g., $\{0, 1\}$). The algorithm is also given as input some hypothesis class $\cH\subseteq \{0, 1\}^{\cX}$ of binary functions mapping $\cX$ to $\cY$. The algorithm accepts a sequence of classification queries given by a sequence of i.i.d. feature-vectors $\cQ=(\tx_1, \tx_2, \ldots)$, drawn from the marginal distribution of $\cD$ over $\cX$, denoted as $\cD_{\cX}$. Here, the feature-vectors defining the set of queries $\cQ$ do not involve any privacy constraint. The queries are also assumed to arrive one at a time, and the algorithm is required to answer the current query $\tx_j$ by predicting a label $\hy_j$ for it before seeing the next query (\emph{online setting}). The goal is to answer up to a given number $m$ of queries (which is a parameter of the problem) such that, (i) the entire process of answering the $m$ queries is $(\eps, \delta)$-differentially private, and (ii) the average excess error in the predicted labels does not exceed some desired level $\alpha \in (0, 1)$; specifically, $\frac{1}{m}\sum_{j=1}^m \ind\left(\hy_j\neq \ty_j\right)\leq \alpha + \min\limits_{h\in\cH}\err\left(h; \cD\right),$ where $\ty$ is the corresponding (hidden) true label, and $\min\limits_{h \in \cH}\err(h; \cD)$ is the approximation error associated with $\cH$, i.e., the least possible true (population) error that can be attained by a hypothesis in $\cH$ (see Section~\ref{sec:prelim} for formal definitions).

One could argue that a more direct approach for differentially private classification would be to design a differentially private learner that, given a private training set as input, outputs a classifier that is safe to publish and then can be used to answer any number of classification queries. However, there are several pessimistic results that either limit or eliminate the possibility of differentially private learning even for elementary problems such as one-dimensional thresholds \ifthenelse{\alt=1}{\citep{bun2015differentially, almm19}}{\cite{bun2015differentially, almm19}}. Therefore, it is natural to study the problem of classification-query release under differential privacy as an alternative approach.
%Therefore, it is natural to study an alternative approach for differentially private classification, andparticularly, to study the problem of classification queries release under differential privacy.

A recent formal investigation of this problem was carried out in \ifthenelse{\alt=1}{\citep{bassily2018model}}{\cite{bassily2018model}}. This recent work gives an algorithm based on a combination of two useful techniques from the literature on differential privacy, namely, the \emph{sub-sample-and-aggregate} technique \ifthenelse{\alt=1}{\citep{NRS07, ST13}}{\cite{NRS07, ST13}} and the \emph{sparse-vector} technique \ifthenelse{\alt=1}{\citep{DR14}}{\cite{DR14}}. \ifthenelse{\alt=1}{The algorithm by}{The algorithm in} \cite{bassily2018model}, hereafter denoted as $\cA_{\samp}$, assumes oracle access to a generic, non-private (agnostic) PAC learner $\cB$ for $\cH$. In this work, we give non-trivial improvements over the results of \cite{bassily2018model} in the agnostic PAC setting. More details on the comparison with \cite{bassily2018model} are given in the ``Related work'' section below. Our improvements are in terms of the attainable accuracy guarantees and the associated private sample complexity bounds in the agnostic setting. These improvements are achieved via importing new ideas and techniques from literature (particularly, the elegant agnostic-to-realizable reduction technique of \ifthenelse{\alt=1}{\citep{BeimelNS14}}{\cite{BeimelNS14}}) to provide an improved construction %for the \emph{private classification queries release} algorithm of 
for the one that appeared in \ifthenelse{\alt=1}{\citep{bassily2018model}}{\cite{bassily2018model}}.

\subsection*{Main results}
In this work, we formally study algorithms for classification queries release under differential privacy in the agnostic PAC model. We focus on the sample complexity of such algorithms as a function of the privacy and accuracy parameters as well as the number of queries to be answered. For simplicity, in the expressions given below for our upper bounds, we will assume that $\eps=\Theta(1)$.
\begin{itemize}[leftmargin=*]
    \item We give an algorithm for this problem that is well-suited for the agnostic setting. Our algorithm is a two-stage construction that is based on a careful combination of the relabeling technique of \ifthenelse{\alt=1}{\citep{BeimelNS14}}{\cite{BeimelNS14}} and the private classification algorithm $\cA_{\samp}$ \ifthenelse{\alt=1}{by}{of} \cite{bassily2018model} (see ``Techniques'' section below). 
    
    \item We show that our construction provides significant improvements over the results of \cite{bassily2018model} for the agnostic setting:
    \begin{itemize}[leftmargin=*]
        \item The error guarantees in \ifthenelse{\alt=1}{\citep{bassily2018model}}{\cite{bassily2018model}} involves a constant blow-up (a multiplicative factor $\approx 3$) in the approximation error $\min\limits_{h\in\cH}\err(h; \cD)$ associated with the given hypothesis class $\cH$. Using our construction, we give a standard excess error guarantee that does not involve such a blow-up.
        \item We show that our construction can answer up to $m$ queries with average excess error $\alpha$ using a private sample whose size $\approx \vc(\cH)/\alpha^2\cdot \max\left(1, \sqrt{m}\,\alpha^{3/2}\right)$ (assuming $\epsilon$ is a constant, e.g. $0.1$), where $\vc(\cH)$ is the VC-dimension of $\cH$. Note that this implies that we can answer up to $\approx 1/\alpha^3$ queries with private sample size that is essentially the same as the standard non-private sample complexity in the agnostic PAC model. i.e., that many queries can be answered with essentially no additional cost due to privacy. 
        \item Using \ifthenelse{\alt=1}{a recent result}{recent results} of \cite{ABM19} on the sample complexity of \emph{semi-private learners} (introduced \ifthenelse{\alt=1}{by}{in} \cite{beimel2013private}), we show that our construction immediately leads to a \emph{universal} private classification algorithm that can answer \emph{any} number of classification queries using a private sample of size $\approx \frac{\vc(\cH)}{\alpha^2}\cdot\max\left(1, \sqrt{\vc(\cH)}\,\alpha\right)$, which is independent of the number of queries. We note that when $\alpha = O\left(1/\sqrt{\vc(\cH)}\right)$ and assuming the privacy parameter $\eps=\Theta(1),$ our sample bound nearly matches the standard non-private sample complexity in agnostic PAC model. This implies that in this regime, we attain a nearly optimal sample complexity bound for privately answering \emph{any} number of classification queries. Equivalently, our bound is nearly optimal for any class $\cH$ with $\vc(\cH)=O(1/\alpha^2)$. We note that the setting studied \ifthenelse{\alt=1}{by}{in} \cite{ABM19} is tantamount to the setting of offline (batch) classification where the whole set of unlabeled data (the set of queries in our case) is available and given to the algorithm beforehand. Whereas, as described earlier, in this work we study the online setting (which was also studied \ifthenelse{\alt=1}{by}{in} \cite{bassily2018model}). Hence, the upper bound on the private sample complexity obtained \ifthenelse{\alt=1}{by}{in} \cite{ABM19} is \emph{not} valid in our setting.  
        
        %This implies that in the regime where $\alpha = 1/\sqrt{\vc(\cH)}$, we can privately answer \emph{any} number of classification queries with private sample size that is essentially the same as the standard non-private sample complexity of agnostic PAC learning.

        %This follows from the following observation: after answering $m\approx \vc(\cH)/\alpha$ queries, we can use those the unlabeled feature-vectors of those queries as a ``public'' dataset that is fed to the semi-private 
        
    \end{itemize}

\end{itemize}
%\annote{Queries in \cite{ABM19} offline and the bounds are not comparable}{\color{red} Added a couple of lines on comparison to \cite{ABM19}.}

%\annote{add NIPS in the ref}

\paragraph{Techniques: } Our algorithm is a two-stage construction. In the first stage, the input training set is pre-processed once and for all via a relabeling procedure due to \ifthenelse{\alt=0}{Beimel et al.}{}\cite{BeimelNS14} in which the labels are replaced with the labels generated by an appropriately chosen hypothesis in the given hypothesis class $\cH$. This step allows us to reduce the agnostic setting to a realizable one. In the second stage, we first sample a new training set from the empirical distribution of the relabeled set in the first stage, then feed it to $\cA_{\samp}$ of \cite{bassily2018model} together with other appropriately chosen input parameters. To formally prove the accuracy guarantee of our construction, in our analysis we use some tools from learning theory (e.g., the uniform-convergence argument of Claim~\ref{clm:disRate}). As mentioned earlier, we also use the framework of semi-private learning \ifthenelse{\alt=1}{\citep{beimel2013private, ABM19}}{\cite{beimel2013private,ABM19}} to transform our algorithm into a universal private classification algorithm. 
%\annote{Add sentence elaborating importance of Claim 3.6} {\color{red} I took stab at it.}

%{\color{blue} Algorithm $\cA_{\samp}$ splits the input training set into $k$ chunks (for appropriately chosen $k$) and uses each of those chunks to train $\cB$, which would in turn outputs an ensemble of good classifiers (hypotheses) $h_1, \ldots, h_k \in \{0, 1\}^{\cX}$. For each input query $\tx_j$, the votes $(h_1(\tx_j), \ldots, h_k(\tx_j))$ are computed, then some strong notion of stability for the majority vote is tested (as in the distance-to-instability framework \cite{ST13}). If the majority vote is sufficiently stable, $\cA_{\samp}$ returns the majority vote as the predicted label for $\tx_j$; otherwise, it returns a random label. The sparse-vector framework is employed to efficiently manage the privacy budget over the $m$ queries. In particular, by employing the sparse-vector technique, the privacy budget of $\cA_{\samp}$ is only consumed by those queries where the majority vote is not stable.} \rnote{Perhaps, this should not be described here.}

%accepts a private labeled training set $S$ as input and requires only a black-box access to any generic non-private learning algorithm $\cA_{\gen}$ with good accuracy. Given a sequence of classification queries (in the form of a sequence of unlabeled feature-vectors $(x_1, \ldots, x_m)$), $\cA_{\prv}$ uses $S$ and $\cA_{\gen}$ to privately generate a sequence of corresponding predictions (labels) $(\hy_1, \ldots, \hy_m)$. 

\subsection*{Related work} 
%\annote{Fix references after changing to ALT format}

Our results are most closely related to \ifthenelse{\alt=1}{the results given by}{} \cite{bassily2018model}. \ifthenelse{\alt=1}{They provide formal accuracy guarantees for their algorithm in both the realizable and agnostic settings of the PAC model.}{In \cite{bassily2018model}, Bassily et al. provide formal accuracy guarantees for their algorithm in both the realizable and agnostic settings of the PAC model.} However, the accuracy guarantees \cite{bassily2018model} provide for the agnostic setting is far from optimal. In particular, their guarantees involves a constant blow-up in the approximation error $\min\limits_{h\in\cH}\err(h; \cD)$, which would limit the utility of their construction in scenarios where the approximation error is not negligible. In fact, in most typical scenarios in practice, the approximation error associated with the hypothesis (model) class is a non-negligible constant, (e.g., the test error attained by some state-of-the-art neural networks on benchmark datasets can be as large as  $5\%$, or $10\%$). Our improved construction avoids this  blow-up in the approximation error.

%\annote{Think whether to add other region}
The construction \ifthenelse{\alt=1}{by}{in} \cite{bassily2018model} can answer up to $m$ queries with average excess error $\alpha+O(\gamma)$ (where $\gamma = \min\limits_{h\in\cH}\err(h; \cD)$ is the approximation error) using a private sample of size $\approx \frac{\vc(\cH)}{\alpha^2}\cdot\max\left(1, \sqrt{m\,\alpha}\right)$ \ifthenelse{\alt=1}{\citep[follows from Theorem 3.5][]{bassily2018model}}{(follows from \cite[Theorem 3.5]{bassily2018model})}. Given our results discussed in the ``Main results'' section above, it follows that our sample complexity bound is tighter than that of \cite{bassily2018model} by roughly a factor of 
$\max\left(1, \min\left(\sqrt{m\,\alpha},~ 1/\alpha\right)\right)$. In particular, our bound is tighter by roughly a factor of $\sqrt{m\alpha}$ for $\frac{1}{\alpha} \leq m < \frac{1}{\alpha^3} $, and it is tighter by roughly a factor of $\frac{1}{\alpha}$ for $m \geq \frac{1}{\alpha^3}$. Equivalently, for the same private sample size, our construction can answer roughly a factor of $1/\alpha^2$ more queries than that of \cite{bassily2018model}. 

%$1/\alpha$ when $m\geq 1/\alpha^3$. Equivalently, for the same private sample size, our construction can answer roughly a factor of $1/\alpha^2$ more queries than that of \cite{bassily2018model}. 

%Bassily et al. 
\ifthenelse{\alt=1}{\cite{bassily2018model}}{Bassily et al.~\cite{bassily2018model}} also extend their construction to provide a semi-private learner that can finally produce a classifier. This is done by answering a sufficiently large number of queries then applying the \emph{knowledge transfer} technique using the new training set formed by the set of answered queries. The output classifier can then be used to answer any subsequent queries, and hence, their extended construction provides a universal private classification algorithm. Their private sample complexity bound for this task is $\approx \vc(\cH)^{3/2}/\alpha^{5/2}$ \ifthenelse{\alt=1}{\cite[see Theorem~4.3][]{bassily2018model}}{(see \cite[Theorem~4.3]{bassily2018model})}. On the other hand, our universal private classification algorithm yields a private sample complexity bound $\approx \frac{\vc(\cH)}{\alpha^2}\cdot\max\left(1, \sqrt{\vc(\cH)}\,\alpha\right)$, which is tighter than that of \cite{bassily2018model} by roughly a factor of $\min\left(\sqrt{\frac{\vc(\cH)}{\alpha}},~\frac{1}{\alpha^{3/2}}\right)$. Moreover, our bound is nearly optimal when $\alpha=O\left(1/\sqrt{\vc(\cH)}\right)$.

%Given our results in the ``Main results'' section above, our universal private classification algorithm yields a private sample complexity bound that is tighter by roughly a factor of $\min\left(\sqrt{\frac{\vc(\cH)}{\alpha}},~\frac{1}{\alpha^{3/2}}\right)$.

%\annote{Applying the results of \cite{ABM19} to results from \cite{bassily2018model}, the private sample complexity bound for the outputting a classifier is $\approx \frac{\vc(\cH)}{\alpha^2}\cdot\max\left(1,\sqrt{\vc(\cH)}\right)$. Thus our results give a private sample complexity bound which is near optimal to non-private sample complexity of agnostic PAC learning for $1 \leq \vc(\cH) \leq \frac{1}{\alpha^2}$. While \cite{bassily2018model}  give a private sample complexity bound which is near optimal for $\vc(\cH) = O(1)$ }{\color{red} I edited the above paragraph.}
\medskip

\paragraph{Other related works:~}\ifnum\alt=1 \cite{dwork2018privacy} consider the same problem, but focus on the sample complexity of a single query. In the agnostic PAC setting, their accuracy guarantee does not suffer from the constant blow-up in the approximation error as in the results of \cite{bassily2018model}. However, their sample complexity upper bound scales as $\approx \vc(\cH)/\eps\,\alpha^3$, which is sub-optimal.
%Our results imply that the bound shown \ifthenelse{\alt=1}{by}{in} \cite{dwork2018privacy} for the agnostic setting is sub-optimal. 
Assuming $\epsilon=\Theta(1),$ our bound in the single-query setting (i.e., $m=1$) is essentially optimal as it nearly matches the standard non-private sample complexity in the agnostic PAC model. In an independent work, \cite{Dagan-Feldman2019} further study the connections between uniform stability and differential privacy in the context of PAC learning, and give a new algorithm that yields a sample complexity bound of $\approx \vc(\cH)/\alpha^2 + \vc(\cH)^2/\eps\,\alpha$ in the single-query setting. Their bound exhibits the optimal dependence on $\epsilon$, and when $\vc(\cH)\leq \eps/\alpha,$ it is tighter than our bound by a factor of $1/\eps$. They also give a new, simpler algorithm based on connections to uniform stability that yields the same bound as ours in the single-query setting. 
\else
In \cite{dwork2018privacy}, Dwork and Feldman consider the same problem, but focus on the sample complexity of a single query. In the agnostic PAC setting, their accuracy guarantee does not suffer from the constant blow-up in the approximation error as in the results of \cite{bassily2018model}. However, their sample complexity upper bound scales as $\approx \vc(\cH)/\eps\,\alpha^3$, which is sub-optimal.
%Our results imply that the bound shown \ifthenelse{\alt=1}{by}{in} \cite{dwork2018privacy} for the agnostic setting is sub-optimal. 
Assuming $\epsilon=\Theta(1),$ our bound in the single-query setting (i.e., $m=1$) is essentially optimal as it nearly matches the standard non-private sample complexity in the agnostic PAC model. In an independent work \cite{Dagan-Feldman2019}, Dagan and Feldman further study the connections between uniform stability and differential privacy in the context of PAC learning, and give a new algorithm that yields a sample complexity bound of $\approx \vc(\cH)/\alpha^2 + \vc(\cH)^2/\eps\,\alpha$ in the single-query setting. Their bound exhibits the optimal dependence on $\epsilon$, and when $\vc(\cH)\leq \eps/\alpha,$ it is tighter than our bound by a factor of $1/\eps$. They also give a new, simpler algorithm based on connections to uniform stability that yields the same bound as ours in the single-query setting. 

\fi

Prior to the work of \cite{bassily2018model, dwork2018privacy}, there have been several works that considered similar problem settings, \ifthenelse{\alt=1}{\cite[e.g.][]{hamm2016learning, papernot2017semi, papernot2018scalable}}{e.g., \cite{hamm2016learning, papernot2017semi, papernot2018scalable}}. The last two references gave different algorithms and offered extensive empirical evaluation, however, they do not provide any formal accuracy guarantees.

\section{Preliminaries}\label{sec:prelim}
%\subsection*{Notation} 
\paragraph{Notation:~} For classification tasks we denote the space of feature vectors by $\cX$, the set of labels by $\cY$, and the data universe by $U = \cX \times \cY$. %Let $U^*= \cup_{i=1}^{\infty} U^i$.
A function $h : \cX \rightarrow \cY$ is called a hypothesis and it labels data points in the feature space $\cX$ by either $0$ or $1$ i.e. $\cY = \{0,1\}$. A set of hypotheses $\cH \subseteq \{0,1\}^\cX$ is called a hypothesis class. The VC dimension of $\cH$ is denoted by $\vc(\cH)$. 
 %We assume here that the range of the set \cY is bounded.
We use $\cD$ to denote a distribution defined over $U = \cX\times \cY$, and $\cD_\cX$ to denote the marginal distribution over $\cX$. A sample dataset of $n$ i.i.d. draws from $\cD$ is denoted by $S = \{(x_1,y_1),\cdots,(x_n,y_n)\}$, where $x_i \in \cX$ and $y_i \in  \cY$. 

\paragraph{Expected error: }The expected error of a hypothesis $h : \cX \rightarrow \cY$ with respect to a
distribution $\cD$ over $U$ is defined by $\err(h;\cD) \triangleq \ex{(x,y)\sim \cD}{\ind(h(x) \neq y)}$. The excess expected error is defined as $\err(h;\cD) - \min\limits_{h\in \cH}\err(h; \cD) $.

\paragraph{Empirical error: }The empirical error of a hypothesis $h : \cX \rightarrow \cY$ with respect to a labeled set $S$ is denoted by $\widehat{\err}(h;S) ~\triangleq \frac{1}{n}\sum_{i=1}^{n} \ind(h(x_i) \neq y_i)$.

\noindent The problem of minimizing the empirical error on a dataset is known as Empirical Risk Minimization (ERM). We use $h_S^\erm$ to denote the hypothesis that minimizes the empirical error with respect to a dataset $S$, $h_S^\erm \triangleq \argmin\limits_{h \in \cH} \widehat{\err}(h;S)$.

\paragraph{Expected disagreement: }The expected disagreement between a pair of hypotheses $h_1$ and $h_2$ with respect to a distribution $\cD_\cX$ is defined as $\dis(h_1,h_2;\cD_\cX)~\triangleq\ex{x \sim \cD_\cX}{\ind(h_1(x))\neq h_2(x))}.$

\paragraph{Empirical disagreement: }The empirical disagreement between a pair of hypotheses $h_1$ and $h_2$ w.r.t. an unlabeled dataset \mbox{$S_u =\{x_1,\ldots,x_n\}$} is defined as \mbox{$\widehat{\dis}(h_1,h_2;S_u) \triangleq \frac{1}{n}\sum\limits_{i=1}^{n}\ind(h_1(x_i))\neq h_2(x_i))$}.%\rnote{fix alignment here and everywhere}

\paragraph{Realizable setting: }In the realizable setting of the PAC model, there exists a $h^* \in \cH$ such that $\err(h^*;\cD) = 0$ i.e., the true labeling function is assumed to be in $\cH$. In this case, the distribution $\cD$ can be described by $\cD_\cX$ and the hypothesis $h^* \in \cH$. Such a distribution $\cD$ is called \textit{realizable} by $\cH$. Hence, for realizable distributions, the expected error of a hypothesis $h$ will be denoted as $\err(h;(\cD_\cX,h^*)) \triangleq \ex{x\sim \cD_\cX}{\ind(h(x) \neq h^*(x))}$. 
%\annote{change to say it in terms of distribution is realizable over $\cH$}

% \paragraph{Agnostic setting: }Agnostic PAC setting removes the assumption about the labeling function and is not necessarily in $\cH$. The difference in this setting is that the data are drawn from a distribution $\cD$ over $U=\cX\times \cY$ and for a hypothesis $h$ the excess expected error is given by $\err(h; \cD) - \min_{h\in \cH}\err(h; \cD)$. 
%\subsection*{Definitions} 
%Next, we define the notion of differential privacy (\citep{DKMMN06,DMNS06}). For any two datasets $S, S' \in U^n$, we denote the symmetric difference between them by $S\Delta S'$.
%\annote{remove}
\begin{defn}[Differential Privacy \ifthenelse{\alt=1}{\citep{DKMMN06,DMNS06}}{\cite{DKMMN06,DMNS06}}]
Let $\epsilon,\delta >0$. A (randomized) algorithm $M:U^n \rightarrow \cR$ is $(\eps,\delta)$-differentially private if for all pairs of datasets $S,S'\in U^n$ that differs in exactly one data point, and every measurable $\cO \subseteq \cR$, with probability at least $1-\delta$ over the coin flips of $M$, we have: $$\Pr \left(M(S) \in \cO \right) \leq e^\eps \cdot \Pr \left(M(S') \in \cO \right) +\delta.$$
%When $\delta=0$, it is known as \emph{pure} differential privacy, and parameterized only by $\eps$ in this case. 
\end{defn} 

We study private classification algorithms that take as input a private labeled dataset $S \sim \cD^n$, and a sequence of classification queries $\cQ = (\tx_1,\ldots,\tx_m) \sim \cD^{m}_{\cX}$, defined by $m$ unlabeled feature-vectors from $\cX$, (where $m$ is an input parameter), and output a corresponding sequence of predictions, i.e., labels, $(\hy_1,\ldots,\hy_m)$.
%\rnote{added the previous sentence. Please, after you write something, read it again to see if something is missing} 
Here, we assume that the classification queries come one at a time and the algorithm is required to generate a label for the current query before seeing and responding to the next query. The goal is: i) after answering $m$ queries the algorithm should satisfy $(\epsilon,\delta)$-differential privacy, and ii) the labels generated should be $(\alpha,\beta)$-accurate with respect to a hypothesis class $\cH$: a notion of accuracy which we formally define shortly. We give a generic description of the above classification paradigm in Algorithm~\ref{Alg:seqClass} below (denoted as $\cA_\prvcl$).

%\rnote{made some few edits to the paragraph above.}
 	\begin{algorithm}
 		\caption{$\cA_\prvcl$: Private Classification-Query Release Algorithm}
 		\begin{algorithmic}[1]
 			\REQUIRE Private dataset: $S \in (\cX \times \cY)^n$, upper bound on the number of queries: $m$, online sequence of classification queries: $\cQ=(\tx_1, \tx_2, \ldots ,\tx_m )$, ~hypothesis class: $\cH$, ~privacy parameters $\epsilon,\delta >0$, ~accuracy: $\alpha$, and failure probability: $\beta$
 			\FOR{$j=1, \ldots, m$}
	        	\STATE $\hy_{j}\leftarrow \mathsf{PrivLabel}(S,\cH,\left\{(\tx_i,\hy_i)\right\}_{i=1}^{j-1}, \tx_j)$ %\rnote{propagate these changes in names everywhere}
	        	\COMMENT{Generic procedure that, given $S,\cH$, the history $\left\{(\tx_i,\hy_i)\right\}_{i=1}^{j-1}$, and the current query $\tx_j$, generates a label $\hy_j$}\label{step:privclass}
	        	\STATE Output $\hy_j$
		    \ENDFOR
	
 		\end{algorithmic}
 		\label{Alg:seqClass}
 	\end{algorithm}
 %\rnote{change the term to: Private Classification-Query Release Algorithm, and the subscript to $\mathsf{PrivClass}$}			
The algorithm $\cA_\prvcl$ invokes a procedure $\mathsf{PrivLabel}$, which is a generic classification procedure that given the input private training set $S$, the knowledge of hypothesis class $\cH$, and the previous queries and outputs, it generates a label for an input query (feature-vector) $\tx \in \cX$.  
\begin{defn}[\boldmath$(\epsilon,\delta,\alpha,\beta,n,m)$-Private Classification-Query Release Algorithm]\label{def:PSC}
Let $\cH$ be a hypothesis class $\cH \subseteq \{0,1\}^\cX$. Let $\epsilon,\delta,\alpha,\beta \in(0,1)$.  A randomized algorithm $\cA$ (whose generic format is described in Algorithm~\ref{Alg:seqClass}) is said to be an $(\epsilon,\delta,\alpha,\beta,n,m)$-$\pcqr$ (private classification-query release) algorithm for $\cH$, if the following conditions hold:%\rnote{make the change in the naming here}
\begin{enumerate}
    \item For any sequence $\cQ \in \cX^m$, $\cA$ is $(\epsilon,\delta)$-differentially private with respect to its input dataset.
    \item For every distribution $\cD$ over $\cX \times \cY$, given a dataset $S \sim \cD^n$ and a sequence $V \triangleq ((\tx_1,\ty_1),\ldots,\allowbreak (\tx_m,\ty_m)) \sim \cD^m$ (where $\tx_i$'s are the queried feature-vectors in $\cQ$ and $\ty_i$'s are their true hidden labels), $\cA$ is $(\alpha,\beta)$-accurate with respect to $\cH$, where our notion of $(\alpha,\beta)$-accuracy is defined as follows: With probability at least $1-\beta$ over the choice of $S,~ V$, and the internal randomness in $\mathsf{PrivLabel}$ (Step~\ref{step:privclass} in Algorithm~\ref{Alg:seqClass}), we have
    $$  \frac{1}{m}\sum_{j=1}^{m} {\ind(\hy_j \neq \ty_j)} \leq \alpha + \gamma, $$
    where $\gamma \triangleq \min\limits_{h\in \cH}\err(h; \cD).$
\end{enumerate}
In the realizable setting, we have an analogous definition where $\gamma=0$. In this case, we say that the algorithm is a $\pcqr$ algorithm for $\cH$ in the realizable setting.
\end{defn}
%\annote{Change everywhere to agnostic.} 
%\rnote{made a pass on the definition and the alg. just minor edits}

% \begin{defn}
% 	Let $M$ be an algorithm that given a dataset $S \in  (\cX \times \cY )^n$ and a point $x \in \cX$ produces a value in $\cY$ . We say that M is $(\epsilon, \delta)$-differentially private prediction algorithm if for every $x \in \cX$,the output $M(S, x)$ is $(\epsilon, \delta)$-differentially private with respect to $S$. 	
% \end{defn}

\ifthenelse{\alt=1}{\subsection{Previous work on private classification-query release \citep{bassily2018model}}}{\subsection{Previous work on private classification-query release \cite{bassily2018model}}}\label{sec:Apriv}

\ifthenelse{\alt=1}{\cite{bassily2018model} give}{In \cite{bassily2018model}, they give} a construction for a $\pcqr$ algorithm (referred to as $\cA_\samp$), which combines the sub-sample-aggregate framework \ifthenelse{\alt=1}{\citep{NRS07,ST13}}{\cite{NRS07,ST13}} with the sparse-vector technique \ifthenelse{\alt=1}{\citep{DR14}}{\cite{DR14}}. \ifthenelse{\alt=1}{They}{Bassily et al. \cite{bassily2018model}} provide formal privacy and accuracy guarantees with sample complexity bounds for $\cA_\samp$ in both the realizable and agnostic settings of the PAC model. %{\color{red}we should move the following line, next paragraph, and the description of $\cA_\samp$ to the appendix. We are already at 15 pages even after moving the proofs of Sec 3.1} 
As in the sparse-vector technique, one important input parameter to $\cA_\samp$ is cut-off paramter $T$, which gives bound on the number of the so-called ``unstable queries'' that $\cA_{\samp}$ can answer before the privacy budget is consumed. We formally describe $\cA_\samp$ and the notion of ``unstable queries'' in Appendix~\ref{apndx:subsamp} for completeness.
Here, we restate the results \ifthenelse{\alt=1}{by}{of} \cite{bassily2018model} for the realizable and agnostic settings. 

%\annote{Like in the sparse-vector tech one imp parameter to $\cA_\samp$ is cut-off parameter T, which gives the bound on the "unstable queries". Formal description of algorithm and the concept of "unstable queries" is given in Appendix B}
%{\color{red}Fix the formatting in the two lemmas .. symbols floating outside the margin}

%in the language of Definition~\ref{def:PSC}.
\begin{lem}[Realizable Setting: follows from Theorems~3.2 \& 3.4, \ifthenelse{\alt=1}{\citep{bassily2018model}}{\cite{bassily2018model}}]\label{lem:privPAC}
	Let $\epsilon,\delta >0$ and, $\alpha, \beta \in (0,1).$ Let $\cH$ be a hypothesis class with $\vc(\cH)=d$. Suppose that $\cB$ in $\cA_{\samp}$ is a PAC learner for $\cH$. Let $\cD$ be any distribution over $U$ that is realizable by $\cH$. There is a setting for the cut-off parameter $T=\max\left(1,~\tilde{O}\left(m\,\alpha \right)\right)$ such that $\cA_\samp$ is an $(\epsilon,\delta,\alpha,\beta,n,m)$-$\pcqr$ algorithm for $\cH$ in the realizable setting where the private sample size is $n = \tilde{O}\left(\frac{d}{\epsilon\,\alpha}\cdot\max\left(1,\sqrt{m\,\alpha}\right)\right)$.
%\rnote{The bound on n needs revision .. instead of $\sqrt{m\alpha}$, it should be $\max(1, \sqrt{m\alpha})$}	
\end{lem}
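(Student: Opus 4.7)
The plan is to assemble the lemma by invoking the privacy and accuracy analyses of $\cA_\samp$ in the realizable setting, which are precisely the content of Theorems~3.2 and 3.4 of \cite{bassily2018model}. I would begin by recalling the anatomy of $\cA_\samp$: it partitions the private sample $S$ into $k$ disjoint subsamples $S_1,\ldots,S_k$, runs the non-private PAC learner $\cB$ on each $S_i$ to obtain hypotheses $h_1,\ldots,h_k$, and then, for each query $\tx_j$, runs a sparse-vector-style stability check on the vote counts $c_j = |\{i:h_i(\tx_j)=1\}|$. Queries whose noisy vote lies clearly on one side of $k/2$ are answered by the aggregate majority; those that fall inside a ``borderline'' window are flagged as unstable and answered by a separate private mechanism. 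The cut-off $T$ bounds the number of unstable flags that may be issued before the privacy budget is exhausted.

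For the privacy claim I would directly invoke the privacy analysis of the sparse-vector technique. Swapping one datapoint in $S$ alters at most one of the hypotheses $h_i$, so the vote counts $c_j$ have sensitivity $1$; calibrating the Laplace noise to scale $\approx T/\epsilon$ and allowing a $\delta$ slack for composition of the above-threshold outputs yields $(\epsilon,\delta)$-differential privacy uniformly over the number $m$ of queries processed. This step is agnostic to realizability and is essentially the substance of Theorem~3.2 of \cite{bassily2018model}.

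For the accuracy claim in the realizable setting, I would use the realizability of each $S_i$: since $S_i\sim \cD^{n/k}$ with $\cD$ realizable, the standard PAC bound gives that each $h_i = \cB(S_i)$ has population error at most $\alpha' = \tilde{O}(d\,k/n)$ with probability $\geq 1-\beta/(2k)$. A union bound across the $k$ learners plus a Markov-style argument shows that the expected fraction of queries on which the stability margin is violated is $\tilde{O}(\alpha')$, and by standard concentration the number of truly unstable queries in the stream is at most $\tilde{O}(m\alpha' + \log(1/\beta))$ with high probability. Setting the cut-off $T = \max(1, \tilde{O}(m\alpha))$ then absorbs all of these without triggering a sparse-vector failure; the $T$ unstable queries each contribute at most $1$ to the empirical loss, hence at most $T/m = O(\alpha)$ to the average error, while the stable queries inherit the $\alpha'$ accuracy of the aggregated majority. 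This is essentially Theorem~3.4 of \cite{bassily2018model}.

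The last step is to calibrate the free parameters $k$, $\alpha'$, and the sparse-vector margin so that all the above requirements are simultaneously satisfied, and then read off the resulting sample bound. Requiring $\alpha' = \Theta(\alpha)$ for the stable-query error to fit the budget, together with $n/k = \tilde{\Omega}(d/\alpha)$ from the PAC bound and the sparse-vector constraint that the noise $\tilde{O}(T/\epsilon)$ not dominate the stability margin, leads to $k = \tilde{\Theta}(\max(1,\sqrt{m\alpha})/\epsilon)$ and therefore $n = \tilde{O}\bigl(d/(\epsilon\alpha)\cdot\max(1,\sqrt{m\alpha})\bigr)$ as claimed. I expect the main technical obstacle to be this parameter calibration: a threshold chosen too aggressively lets the sparse-vector noise flip stable queries into unstable ones (blowing up $T$), while a threshold chosen too loosely flags more queries as unstable than $T$ can absorb (blowing up the error). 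The $\sqrt{m\alpha}$ factor emerges precisely as the geometric mean from balancing these two competing constraints.
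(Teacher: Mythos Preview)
Your proposal is essentially correct and matches the paper's approach, which is simply to cite Theorems~3.2 and~3.4 of \cite{bassily2018model} without reproving anything; the lemma is a restatement of those results, and the paper offers no further argument beyond the citation.

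One small slip worth fixing: you write that the Laplace noise in the sparse-vector component is calibrated to scale $\approx T/\epsilon$, but for $(\epsilon,\delta)$-differential privacy with $T$ above-threshold responses the correct scale is $\tilde{O}(\sqrt{T}/\epsilon)$ (cf.\ the setting $\lambda = \sqrt{32T\log(2/\delta)}/\epsilon$ in the description of $\cA_\samp$). With noise of order $T/\epsilon$ your stability-margin constraint would force $k \approx m\alpha/\epsilon$ rather than $k \approx \sqrt{m\alpha}/\epsilon$, which would yield the wrong sample bound. You do arrive at the correct $k = \tilde{\Theta}(\sqrt{m\alpha}/\epsilon)$ in your final calibration, so the inconsistency is only in the intermediate sentence; replacing $T/\epsilon$ by $\sqrt{T}/\epsilon$ there makes the chain of constraints self-consistent and the $\sqrt{m\alpha}$ factor then falls out directly rather than appearing as an unexplained ``geometric mean.''
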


In the agnostic setting, the accuracy guarantee of \ifthenelse{\alt=1}{\citep{bassily2018model}}{\cite{bassily2018model}} is not compatible with Definition~\ref{def:PSC}; the accuracy guarantee therein has a sub-optimal dependency on the approximation error, $\gamma$ (where $\gamma \triangleq \min\limits_{h\in \cH}\err(h; \cD))$. In particular, their result entails a blow-up in $\gamma$ by a constant factor ($\approx 3$). This significantly limit the applicability of this result in scenarios where $\gamma \gg \alpha$. 

%In fact, in practical scenarios, it is typical to have non-negligible approximation error, which is a constant that does not depend on the sample size. For example, a class of neural networks may have $\gamma=0.1$ (i.e., test accuracy cannot exceed $90\%$) but its excess error can be, say, $\alpha = 10^{-8}$ (for a large enough sample). 

%Thus $\cA_\samp$ in the agnostic setting doesn't satisfy the definition of $(\alpha,\beta)$-accuracy as given in Definition ~\ref{def:PSC}. 

\begin{lem}[Agnostic Setting: follows from Theorems~3.2 \& 3.5, \ifthenelse{\alt=1}{\citep{bassily2018model}}{\cite{bassily2018model}}]\label{lem:AgPrivPAC}
	Let $\epsilon, \delta,\alpha, \beta \in (0,1)$. Let $\cH$ be a hypothesis class with $\vc(\cH)=d$. Suppose $\cB$ in $\cA_{\samp}$ is an agnostic PAC learner for $\cH$. Let $\cD$ be any distribution over $U$, and let $\gamma \triangleq \min\limits_{h\in \cH}\err(h; \cD)$. Let $S \sim \cD^n$ denote the input private sample to $\cA_\samp$. Let $V \triangleq ((\tx_1,\ty_1),\ldots,(\tx_m,\ty_m)) \sim \cD^m,$ where $\tx_i$'s are the queried feature-vectors in $\cQ$ and $\ty_i$'s are their true (hidden) labels. Let $(y_1^\prv, \ldots, y_m^\prv)$ denote the output labels of $\cA_\samp$. There is a setting for the cut-off parameter $T=\max\left(1,~ \tilde{O}\left(m\left(\alpha+\gamma\right)\right)\right)$ such that:
	1) $\cA_\samp$ is $(\epsilon,\delta)$-differentially private with respect to the input training set; 
    2) when the private sample is of size $n = \tilde{O}\left(\frac{d}{\epsilon\,\alpha^2}\cdot\max\left(1,\sqrt{m\,\alpha}\right)\right)$, then with probability at least $1-\beta$ over $S,~V$ and the randomness in $\cA_\samp$, we have:$$\frac{1}{m}\sum_{j=1}^{m} {\ind(y_j^\prv \neq \ty_j)} \leq \alpha + 3\gamma.$$
\end{lem}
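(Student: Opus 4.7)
The plan is to import the sub-sample-and-aggregate analysis behind $\cA_\samp$ from the realizable setting and redo only the accuracy step under the agnostic assumption. Privacy of $\cA_\samp$ is distribution-independent: it rests on the composition of the sub-sample-and-aggregate reduction with the sparse-vector cutoff $T$, neither of which uses anything about $\cD$, so the $(\eps,\delta)$-DP conclusion of Lemma~\ref{lem:privPAC} transfers verbatim. The whole effort is therefore concentrated on proving the average error bound $\alpha+3\gamma$.

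I would partition $S$ into $k$ disjoint equally-sized blocks and run $\cB$ on each block to produce hypotheses $h_1,\ldots,h_k$. Picking the block size $n/k\ge c\,d/\alpha_0^2$ with $\alpha_0=\Theta(\alpha)$, the standard agnostic PAC bound yields $\err(h_i;\cD)\le\gamma+\alpha_0$ for every $i$ with probability at least $1-\beta/4$. Fixing $h^{\ast}\in\argmin_{h\in\cH}\err(h;\cD)$ (so $\err(h^{\ast};\cD)=\gamma$), the $0/1$-triangle inequality then delivers $\dis(h_i,h^{\ast};\cD_\cX)\le 2\gamma+\alpha_0$ for every $i$.

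Next I would split the queries into an \emph{unstable} set $I_u$ (those flagged by $\cA_\samp$, with $|I_u|\le T$ guaranteed by sparse-vector) and a \emph{stable} set $I_s$. On $I_s$ the noisy output $y_j^\prv$ coincides with $\mathrm{maj}_i\, h_i(\tx_j)$ w.h.p., so an error requires the majority to disagree with $\ty_j$; if in addition $h^{\ast}(\tx_j)=\ty_j$, then at least a $(1-\tau)$-fraction of the $h_i$'s must disagree with $h^{\ast}(\tx_j)$, where $\tau$ is the stability margin of $\cA_\samp$. Hence
\begin{align*}
\sum_{j=1}^m \ind(y_j^\prv\ne\ty_j) \;\le\;& |I_u| + \bigl|\{j:h^{\ast}(\tx_j)\ne\ty_j\}\bigr| \\
& + \Bigl|\Bigl\{j:\tfrac{1}{k}\textstyle\sum_i \ind(h_i(\tx_j)\ne h^{\ast}(\tx_j))\ge 1-\tau\Bigr\}\Bigr|.
\end{align*}
Hoeffding's inequality on the $m$ query-label pairs bounds the second term by $\gamma m+O(\sqrt{m\log(1/\beta)})$; Markov's inequality combined with a uniform-convergence step along the lines of Claim~\ref{clm:disRate}—which lifts the population bound $\dis(h_i,h^{\ast};\cD_\cX)\le 2\gamma+\alpha_0$ to the empirical quantity on $\cQ$ for all $k$ hypotheses simultaneously, up to an $O(\sqrt{d/m})$ slack—bounds the third term by $\frac{m}{1-\tau}(2\gamma+\alpha_0+O(\sqrt{d/m}))$.

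Dividing by $m$ and tuning $\tau$ to be a small absolute constant, $T=\tilde O(m(\alpha+\gamma))$ (so $T/m\le\alpha+\gamma$), and $k$ such that both the per-block agnostic rate $\sqrt{d\,k/n}$ does not exceed $\alpha_0$ and the sparse-vector budget stays within $(\eps,\delta)$, the three contributions collapse to $\alpha+3\gamma$—the factor $3$ arises as $1+2/(1-\tau)$ for small $\tau$. Back-solving the constraints yields $n=\tilde O\bigl((d/\eps\alpha^2)\cdot\max(1,\sqrt{m\alpha})\bigr)$, where the $\sqrt{m\alpha}$ originates from the AboveThreshold-style budget requirement $k\gtrsim T/\polylog$. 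The main obstacle is precisely this joint tuning of $(k,T,\tau,\alpha_0)$: $k$ must be large enough for the sparse-vector noise to concentrate within $\tau k$ of the true majority, yet small enough to leave each block with $\Omega(d/\alpha_0^2)$ points; moreover, the amplification factor $2/(1-\tau)$ on $\gamma$ is essentially unavoidable in any purely majority-based reduction—which is exactly the blow-up that motivates the relabeling preprocessing of \cite{BeimelNS14} imported later in the paper to eliminate the $3\gamma$ factor.
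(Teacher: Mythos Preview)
The paper does not actually prove this lemma: it is stated as a direct restatement of Theorems~3.2 and~3.5 of \cite{bassily2018model}, with no argument given. So there is no ``paper's proof'' to compare against; what you have written is a reconstruction of the argument behind those cited theorems. At the structural level your reconstruction is sound---sub-sample-and-aggregate to produce $h_1,\dots,h_k$, bound each $\err(h_i;\cD)\le\gamma+\alpha_0$ by the agnostic guarantee of $\cB$, then a Markov/counting argument over the $m$ queries combined with the sparse-vector cutoff $T$---and this is indeed the skeleton of the argument in \cite{bassily2018model}.

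Where your sketch goes wrong is in the constants, and in two places. First, the claim that on a \emph{stable} query the winning label receives a $(1-\tau)$-fraction of the votes with $\tau$ small is not what $\cA_\samp$ guarantees: the stability threshold $w$ in Algorithm~\ref{Alg:binClas} is $\Theta(\lambda\log(m/\delta))$ while $k=\Theta(\lambda\log(mT/\min(\delta,\beta)))$, so $w/k$ is a fixed constant well below $1$ and a stable query only guarantees majority fraction slightly above $1/2$, not near $1$. With the correct threshold your Markov step on $\dis(h_i,h^{\ast})\le 2\gamma+\alpha_0$ yields roughly $4\gamma$, not $2\gamma$. Second, even granting your $(1-\tau)$ claim, your own bookkeeping gives $(T/m)+\gamma+\tfrac{2\gamma+\alpha_0}{1-\tau}$; since you set $T/m\le\alpha+\gamma$, the three terms sum to $\alpha+4\gamma$ as $\tau\to 0$, not $\alpha+3\gamma$---the formula ``$3=1+2/(1-\tau)$'' silently drops the $\gamma$ contributed by $T/m$. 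The detour through $h^{\ast}$ via the triangle inequality is the source of the extra factor: the tighter route (and the one that actually yields the constant $3$) compares the ensemble directly to the true labels, using $\err(h_i;\cD)\le\gamma+\alpha_0$ to bound $\tfrac{1}{mk}\sum_{i,j}\ind(h_i(\tx_j)\ne\ty_j)\le\gamma+\alpha_0$ and then applying Markov at threshold $1/3$ to conclude that at most $3m(\gamma+\alpha_0)$ queries can be either unstable or stable-but-wrong. That single set contains all errors and all unstable queries simultaneously, which is also what justifies $T=\tilde O(m(\alpha+\gamma))$.
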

%\rnote{Again, the bound on n needs revision .. instead of $\sqrt{m\alpha}$, it should be $\max(1, \sqrt{m\alpha})$}   

\section{Private Release of Classification Queries in the Agnostic PAC Setting}%\rnote{tentative title}

%In this section, we give the main results of this paper. 
In this section we give an improved construction for the private classification-query release algorithm \ifthenelse{\alt=1}{of}{in} \cite{bassily2018model} in the agnostic setting. %\rnote{this discussion needs to change}
Our construction can privately answer up to $m$ queries with excess classification error $\alpha$, and input sample size $\tilde{O}\left(\frac{\vc(\cH)}{\epsilon~\alpha^2}\cdot\max\left(1,\sqrt{m}~\alpha^{3/2}\right)\right)$, (where $\tilde{O}$ hides $\log$ factors of $m,\frac{1}{\alpha},\frac{1}{\delta},\frac{1}{\beta}$). 
%Assuming $\epsilon = \Theta(1)$, it follows that we can answer up-to $\approx 1/\alpha^3$ queries with a private sample whose size is essentially the same as the standard non-private sample complexity of agnostic PAC learning. 
Comparing to the result \ifthenelse{\alt=1}{by}{of} \cite{bassily2018model} for the agnostic setting, where the private sample size is $\approx \frac{\vc(\cH)}{\eps~\alpha^2}\cdot\max(1,\sqrt{m\alpha})$ (Lemma~\ref{lem:AgPrivPAC}), our sample complexity bound is tighter by a factor of $\approx \sqrt{m\alpha}$ when $\frac{1}{\alpha} \leq m < \frac{1}{\alpha^3} $, and it is tighter by a factor of $\approx \frac{1}{\alpha}$ when $m \geq \frac{1}{\alpha^3}$.  
%In particular, the sample complexity (w.r.t. the private training set) in our case is essentially the same as in the classical non-private version of the problem (our bound is off by only a factor of $\tilde{O}(1/\eps)$).
\subsection*{Overview}
%\rnote{elaborate as we discussed}
Our construction is made up of two phases. The first phase is a pre-processing phase in which a subset $S',$ of the input private sample $S,$  is \emph{relabeled} using a ``good'' hypothesis $\hh\in\cH$ to obtain a new sample $S''$. This phase is a reenactment of the elegant technique due to \ifthenelse{\alt=1}{\cite{BeimelNS14}}{Beimel et al. \cite{BeimelNS14}}, which was called \emph{LabelBoost Procedure} therein. By construction $\hh$ is chosen such that its empirical error is close to that of the ERM hypothesis. Hence, we can formally show that when input sample size is sufficiently large, $\hh$ attains low excess error. Moving forward, one may view $\hh$ as if it is the true labeling hypothesis, and hence the agnostic setting can be reduced to the realizable setting. In Section~\ref{sec:Relabel}, we  describe this pre-processing phase and state its guarantees.

Now as we reduced the problem to the realizable setting, in the next phase we invoke the techniques of \ifthenelse{\alt=1}{\citep{bassily2018model}}{\cite{bassily2018model}}. In the second phase, the relabeled training set $S''$ is used to provide input training examples to $\cA_{\samp}$ (described in Section~\ref{sec:Apriv}). Note that $S''$ is no longer i.i.d. from the original distribution. We form a new dataset $\hS$ by sampling data points uniformly with replacement from $S''$ and then feed $\hS$ to $\cA_{\samp}$ as input. This new training set $\hS$ is now i.i.d. from the empirical distribution of $S''$. Via a uniform-convergence argument (see Claim~\ref{clm:disRate}), we can show that that this re-sampling step does not impact our desired accuracy guarantees. We also need to carefully calibrate the privacy parameters of $\cA_\samp$ to take into account the fact that $\hS$ may contain repetitions of the elements in $S''$. Algorithm $\cA_{\samp}$ uses $\hS$ to privately generate labels for an online sequence of classification queries. We formally show that for any setting of the target parameters (accuracy, privacy, and total number of queries), there is a sufficient size for the original input sample $S$ such that our construction attains the desired accuracy and privacy guarantees w.r.t. the entire sequence of queries. We formally describe our construction and provide formal analysis for its privacy and accuracy guarantees in Section~\ref{sec:AgSecCl}.

\subsection{From the agnostic to the realizable setting: A generic reduction}\label{sec:Relabel}%\rnote{rephrase title}

In this section, we describe the pre-processing procedure, denoted as $\cA_{\relb}$ (given by Algorithm~\ref{Alg:relabel} below), which follows from the relabeling technique devised by \ifthenelse{\alt=1}{\cite{BeimelNS14}}{Beimel et al. in \cite{BeimelNS14}}. 

{The algorithm $\cA_\relb$ operates on a private labeled dataset $S' \sim \cD^{n'}$ and on a hypothesis class $\cH$.  
%\rnote{don't be too specific here. also be consistent in notation for $n$} 
Let $S'_u$ denote the unlabeled version of $S',$ i.e., $S'_u = \{x_1,\ldots,x_{n'}\}$, and $\prod_\cH(S'_u)$ denote the set of all possible dichotomies that can be generated by $\cH$ on the set $S'_u$. First the algorithm chooses a finite subset $\widetilde{H}$ of $\cH$ such that each dichotomy in $\prod_\cH(S'_u)$ is represented by one of the hypotheses in $\widetilde{H}$. 
%We note that $\widetilde{H}$ forms an $\alpha$-cover for $\cH$. \annote{Comment out}
%\rnote{add a defn for alpha cover in the prelims}
Note that by Sauer’s lemma \ifthenelse{\alt=1}{\citep[see][]{sauer1972density}}{(see \cite{sauer1972density})}, the size of $\widetilde{H}$ is $O\left((n'/d)^d\right)$, where $d=\vc(\cH)$. Next, $\cA_{\relb}$ chooses a hypothesis $\whh$ using the exponential mechanism with privacy parameter $\tilde{\epsilon}=1$ and a score function $q(S', h) = - \herr(h; S')$. Finally, $\cA_\relb$ uses $\whh$ to rebalel $S'_u$, and outputs this labeled set $S''$.}

%	For exponential mechanism let $q : \left(\cX \times \cY \right)^{n'} \times  \cH \rightarrow \cR $ be the score function which takes a training set $S_u$ with their original labels $\{y_{i_1} , \ldots, y_{i_{n'}} \}$ and a hypothesis $h$ as input and assigns it a score $q(S_u, h) = - \sum_{j=1}^{n'}\ind(h(x_{i_j}) \neq y_{i_j})$. i.e. the score is negative of the number of points misclassified by $h$. Exponential mechanism outputs a random hypothesis with probability proportional to $\exp\left(\frac{\epsilon q(S_u,h)}{2}\right)$, since here maximum score implies minimum empirical error, hypotheses with low empirical error are exponentially more likely to	be selected than ones with higher empirical error.
	
%\annote{Consider $\epsilon'$ here instead of $\epsilon$}	
	\begin{algorithm}[H]
		\caption{$\A_{\relb}$: Relabel Procedure}
		\begin{algorithmic}[1]
			\REQUIRE Private dataset: $S'\in \left(\cX\times \cY\right)^{n'}$, ~a hypothesis class: $\cH$
			
			\STATE $\widetilde{H} \leftarrow \emptyset$
			
			\STATE Let $S'_u = \{x_1 , \ldots, x_{n'} \}$ be the unlabeled version of $S'$.
			
			\STATE For every $\left(y_1,\ldots,y_{n'}\right) \in \prod_\cH(S'_u) = \{\left(h(x_1),\ldots,h(x_{n'})\right): h \in \cH\}$, ~add to $\widetilde{H}$ any arbitrary hypothesis $h \in \cH$ s.t. ~$h(x_i)=y_i, \forall i \in [n']$.
			\STATE Use the exponential mechanism with inputs $S',~\widetilde{H}$, privacy parameter $\tilde{\eps}=1$, and a score function $q(S', h)\triangleq -\herr(h; S')$ to select $\whh$ from $\widetilde{H}$. \label{Stp:expMech}
			
			\STATE Relabel $S'_u$ using $\whh$, and denote this relabeled dataset as $S''$.
			
			\STATE Output $S''$.
			
		\end{algorithmic}
		\label{Alg:relabel}
	\end{algorithm}
%	\rnote{made some changes to the algorithm $\cA_\relb$}
%\annote{Move proof of Lemma 3.1 and 3.2 to appendix}
The following lemmas give the privacy and accuracy guarantees of $\cA_\relb$. Lemma~\ref{lem:RelabelDP} follows directly from \ifthenelse{\alt=1}{\citep{BeimelNS14}}{\cite{BeimelNS14}}. We prove Lemma~\ref{lem:RelabelAcc} in Appendix~\ref{apndx:prfRelb}.
%\annote{The following two lemmas we give the privacy and accuracy guarantees of $cA_\relb$. Lemma 3.1 follows from \cite{BeimelNS14}. For completeness in Appendix A we provide proofs for these two lemmas. }
% 	\begin{lem}\label{lem:RelabelDP}
% 		Let $\epsilon \leq 1$. Let $\cA$ be an $\left(\epsilon,\delta\right)$- differentially private algorithm. Let $\cB$ be an algorithm that on an input dataset $S$ runs $\cA_\relb$ with $S$ and $\eps$ as the input parameters, and then invokes $\cA$ on the output dataset. \annote{Let $\cB$ be an algorithm that invokes $\cA$ on the output of $\cA_\relb(S,\cH,\eps)$.} Then, $\cB$ is $\left(\epsilon,\delta\right)$- differentially private. 
% 	\end{lem}
	\begin{lem}[Lemma 4.1 in \ifthenelse{\alt=1}{\citep{BeimelNS14}}{\cite{BeimelNS14}} restated]\label{lem:RelabelDP}
		Let $\cA$ be an $\left(1,\delta\right)$- differentially private algorithm. Let $\cB$ be an algorithm that on input dataset $S'$ invokes $\cA$ on the output of $\cA_\relb(S',\cH)$. Then, $\cB$ is $\left(4,4e\delta\right)$- differentially private. 
	\end{lem}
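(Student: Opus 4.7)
The plan is to bound the privacy loss of $\cB$ by tracking the two sources of randomness separately: the random choice of $\whh$ via the exponential mechanism inside $\cA_\relb$, and the subsequent randomness of $\cA$ applied to the relabeled dataset.

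First, I would establish the privacy of the exponential-mechanism step (Step~\ref{Stp:expMech} of $\cA_\relb$). The score function $q(S',h) = -\herr(h;S')$ has sensitivity $1/n'$ in $S'$, so with privacy parameter $\tilde{\eps}=1$ the sampling of $\whh$ is $\tilde{\eps}$-differentially private: for neighboring $S',S'_1$ and any hypothesis $h$, $\Pr[\whh=h \mid S'] \leq e^{\tilde{\eps}}\cdot\Pr[\whh=h \mid S'_1]$. Second, I would observe that once $\whh=h$ is fixed, $\cA_\relb(S',\cH)$ equals the deterministic set $S''_h \triangleq \{(x_i,h(x_i))\}_{i=1}^{n'}$, and this mapping preserves the neighbor relation: if $S',S'_1$ differ in one index, then so do $S''_h$ and $S''_{1,h}$ at that same index. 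Thus by the $(1,\delta)$-DP of $\cA$, the distributions of $\cA(S''_h)$ and $\cA(S''_{1,h})$ are $(1,\delta)$-indistinguishable for every fixed $h$.

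To combine the two, I would marginalize over $\whh$: for any event $T$,
\begin{equation*}
\Pr[\cB(S')\in T] \;=\; \sum_h \Pr[\whh=h\mid S']\cdot\Pr[\cA(S''_h)\in T],
\end{equation*}
and apply the two bounds above factor by factor. The additive $\delta$ term, when reweighted from $\Pr[\whh=h\mid S']$ to $\Pr[\whh=h\mid S'_1]$, picks up an extra multiplicative factor of $e^{\tilde{\eps}}$, so the two privacy-loss contributions combine to yield a bound of the form $(\tilde{\eps}+1,\,e^{\tilde{\eps}}\delta)$-DP in the basic analysis. Carrying out the bookkeeping as in \cite{BeimelNS14} produces the stated $(4,4e\delta)$-DP guarantee, with the additional slack in the constants absorbing the technical issue described next.

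The main obstacle will be the data-dependence of the representative set $\widetilde{H}$ constructed in Step~3 of $\cA_\relb$: since $\widetilde{H}$ is built from the dichotomies of $\cH$ on $S'_u$, neighboring datasets $S',S'_1$ in general produce different finite domains for the exponential mechanism, and the pointwise bound $\Pr[\whh=h\mid S']\leq e^{\tilde{\eps}}\Pr[\whh=h\mid S'_1]$ cannot be applied verbatim for $h$ lying outside the common domain. I would resolve this by recasting the analysis in terms of the induced \emph{dichotomy} of $\whh$ on the common feature set, rather than the particular representative hypothesis chosen from $\widetilde{H}$, since only the induced labeling propagates into $S''$ and hence into $\cA$.
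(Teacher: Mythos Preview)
The paper does not supply its own proof of this lemma; it simply imports the result from \cite{BeimelNS14} verbatim, so there is no in-paper argument to compare against. Your sketch is therefore being compared to the original proof in \cite{BeimelNS14}, and at the level of the two main ingredients---the $\tilde\eps$-DP of the exponential mechanism and the observation that, for any \emph{fixed} $h\in\cH$, the map $S'\mapsto S''_h$ preserves the neighbor relation---you have the right picture.

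Where your proposal is thin is precisely the place you flag as the ``main obstacle.'' Your naive composition yields $(\tilde\eps+1,\,e^{\tilde\eps}\delta)=(2,e\delta)$, not $(4,4e\delta)$; the extra factor of $2$ in the $\eps$-budget and the extra factor of $4$ in the $\delta$-budget are not mere bookkeeping slop but come exactly from handling the data-dependent support $\widetilde H$. Your suggested fix---pass to dichotomies on the ``common feature set''---does not quite work as stated: if you restrict to the $n'-1$ shared coordinates, the dichotomy no longer determines $S''$, and if you work on each of $S'_u$ and $S'_{1,u}$ separately you are back to comparing two different index sets. The argument in \cite{BeimelNS14} instead analyzes the exponential mechanism as if it ranged over dichotomies on the \emph{union} $S'_u\cup S'_{1,u}$ and tracks how many hypotheses in $\widetilde H(S')$ collapse onto each dichotomy in $\widetilde H(S'_1)$ (and vice versa); this many-to-one mapping is what costs the additional privacy budget. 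So your outline is correct in spirit, but to actually reach $(4,4e\delta)$ you would need to carry out this matching argument rather than appeal to ``slack in the constants.''
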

	\begin{lem}\label{lem:RelabelAcc}
		Let $\cH$ be a hypothesis class with $\vc(\cH) = d$. Let $\alpha,\beta \in (0,1)$. Let $\cD$ be an arbitrary distribution over $\cX\times \cY$, and $S' \sim \cD^{n'}$ be an input dataset to $\cA_\relb$, where $n' \geq 256\frac{(d + \log(3/\beta))}{~\alpha^2}$. With probability at least $1-\beta$, hypothesis $\widehat{h}$ (generated in Step~\ref{Stp:expMech} of $\A_\relb$) satisfies the following: 
		$$\err\left(\whh; \cD\right)- \err\left(h_{S'}^\erm; \cD\right)\leq \alpha,$$
		where $h_{S'}^\erm$ is the ERM hypothesis w.r.t. the input sample $S'$.
		%$$\widehat{\dis}(\whh,h_{S'}^\erm;S_u) \leq \alpha.$$ 
	\end{lem}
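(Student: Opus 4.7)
\textbf{Proof plan for Lemma~\ref{lem:RelabelAcc}.} The plan is to decompose the excess true error into three pieces via empirical errors and bound each separately:
\begin{equation*}
\err(\whh;\cD)-\err(h_{S'}^{\erm};\cD) = \underbrace{\bigl(\err(\whh;\cD)-\herr(\whh;S')\bigr)}_{(A)}+\underbrace{\bigl(\herr(\whh;S')-\herr(h_{S'}^{\erm};S')\bigr)}_{(B)}+\underbrace{\bigl(\herr(h_{S'}^{\erm};S')-\err(h_{S'}^{\erm};\cD)\bigr)}_{(C)}.
\end{equation*}
Terms $(A)$ and $(C)$ are handled by uniform convergence over $\cH$, and term $(B)$ is handled by the utility guarantee of the exponential mechanism invoked in Step~\ref{Stp:expMech} of $\cA_{\relb}$.

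First I would bound $(B)$. By Sauer's lemma, the set $\widetilde{H}$ constructed by $\cA_{\relb}$ has size at most $\bigl(en'/d\bigr)^d$, since it contains one representative per dichotomy realized by $\cH$ on $S'_u$. The score function $q(S',h)=-\herr(h;S')$ has sensitivity $1/n'$ with respect to changing a single example. Applying the standard utility bound for the exponential mechanism with privacy parameter $\tilde{\eps}=1$, we obtain that with probability at least $1-\beta/3$,
\begin{equation*}
\herr(\whh;S') - \min_{h\in \widetilde{H}} \herr(h;S') \;\leq\; \frac{2\bigl(d\log(en'/d)+\log(3/\beta)\bigr)}{n'}.
\end{equation*}
Since every hypothesis in $\cH$ has a behaviorally identical representative in $\widetilde{H}$ on $S'_u$, we have $\min_{h\in\widetilde{H}} \herr(h;S') = \herr(h_{S'}^{\erm};S')$, so $(B)$ is at most this right-hand side.

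Next, to handle $(A)$ and $(C)$, I would invoke the classical VC uniform-convergence bound: with probability at least $1-2\beta/3$ over $S'\sim\cD^{n'}$,
\begin{equation*}
\sup_{h\in\cH}\bigl|\err(h;\cD)-\herr(h;S')\bigr| \;\leq\; c\sqrt{\tfrac{d+\log(3/\beta)}{n'}}
\end{equation*}
for a suitable absolute constant $c$. This bounds both $(A)$ and $(C)$ (noting that $\whh\in\cH$). Plugging in the hypothesis $n'\geq 256(d+\log(3/\beta))/\alpha^2$, the uniform-convergence terms contribute at most $2c/\sqrt{256}\cdot\alpha\leq\alpha/2$ for a suitable $c$, while the exponential-mechanism term contributes $\tfrac{2(d\log(en'/d)+\log(3/\beta))}{n'}=O(\alpha^2\log(1/\alpha))$, which is $o(\alpha)$ for $\alpha\in(0,1)$. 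A union bound over the two high-probability events gives the conclusion with failure probability at most $\beta$.

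The main subtlety will be verifying that the constant $256$ absorbs both the VC-uniform-convergence constant and the $\log(n'/d)$ overhead from the exponential mechanism simultaneously. Since the exponential-mechanism contribution scales like $\alpha^2\log(1/\alpha)$, it is dominated by the uniform-convergence contribution (which scales like $\alpha$) in the relevant regime, so this poses no real obstacle and only requires plugging in $n'=256(d+\log(3/\beta))/\alpha^2$ and checking the arithmetic. No part of the argument requires any property of $\cD$ beyond the VC bound's distribution-free guarantee, so the conclusion holds for arbitrary $\cD$ as claimed.
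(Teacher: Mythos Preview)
Your proposal is correct and follows essentially the same approach as the paper: the same three-term decomposition via empirical errors, the exponential-mechanism utility bound combined with Sauer's lemma for term $(B)$, and VC uniform convergence for terms $(A)$ and $(C)$, with a union bound to finish. The only cosmetic difference is that the paper allocates $\alpha/3$ to each term, whereas you observe that $(B)=O(\alpha^2\log(1/\alpha))$ is lower order and let the uniform-convergence terms absorb most of the budget.
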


\subsection{A Private Classification-Query Release Algorithm}\label{sec:AgSecCl}

In this section, we describe our $\pcqr$ algorithm $\cA_\Agprv$ (Algorithm~\ref{Alg:AgnSeqClas} below) that combines the two techniques given by $\cA_\relb$, and $\cA_\samp$. As a $\pcqr$ algorithm, $\cA_\Agprv$ takes as input: a private dataset $S \sim \cD^n$, the number of queries $m$, an online sequence of classification queries $\cQ=(\tx_1,\ldots,\tx_m) \sim \cD_{\cX}^m$, \mbox{a hypothesis} class $\cH$, as well as the desired privacy and accuracy parameters. Together with these, $\cA_\Agprv$ also has oracle access to a PAC learner $\cB_\pac$ for $\cH$. %Our algorithm $\cA_\Agprv$ can be described via two stages: first, it invokes algorithm $\A_\relb$, which uses a ``good'' hypothesis $\whh \in \cH$ to relabel $S$, and outputs a relabeled set $S''$. In order to ensure that our input to the next stage is i.i.d., we sample $n'$ (size of $S''$) points with replacement from the empirical distribution of $S''$ to form a new dataset $\hS$. 
First, we randomly sample a subset $S'$ of $S$ of size $n'$, where $n' \approx \epsilon n$, and invoke $\cA_\relb$ on $S'$ and $\cH$. This sampling step is used to boost the privacy guarantee of $\cA_\Agprv$. Note that, dataset $S''$ (output by $\cA_\relb$) is relabeled using hypothesis $\whh \in \cH$. In order to ensure that our input to the next stage is i.i.d., we sample $n'$ points uniformly with replacement from $S''$ to form a new dataset $\hS$ (i.e., $\hS$ is made up of $n'$ i.i.d. draws from the empirical distribution of $S''$). Next, we invoke $\cA_\samp$ in the realizable setting on the dataset $\hS$, $m$, $\cQ$, and $\cB_\pac$ as inputs. We set the cut-off parameter of $\cA_\samp$ as $T = \max\left(1,~\tilde{O}(m\alpha)\right)$, where $\alpha$ is the accuracy parameter of $\cB_\pac$. The privacy parameters to $\cA_\samp$ are set to $(\heps,\hdel)$ defined in Step~\ref{Stp:assign} of $\cA_\Agprv$. This is needed to ensure $(\epsilon,\delta)$-differential privacy for the entire construction. Finally, we output the sequence of private labels $\{y_1^\prv,\dots,y_m^\prv\}$ generated by $\cA_\samp$ for the input sequence of queries.

%Note that, dataset $S''$ (output by $\cA_\relb$) is relabeled using hypothesis $\whh \in \cH$. In order to ensure that our input to the next stage is i.i.d., we sample $n'=\lvert S''\rvert$ points uniformly with replacement from $S''$ to form a new dataset $\hS$ (i.e., $\hS$ is made up of $n'$ i.i.d. draws from the empirical distribution of $S''$). Next, we invoke $\cA_\samp$ in the realizable setting with dataset $\hS$, $m$, $\cQ$, and $\cB_\pac$ as inputs. We set the cut-off parameter of $\cA_\samp$ (the maximum number of allowable ``unstable'' queries) as $T = \tilde{O}(m\alpha)$, where $\alpha$ is the accuracy parameter of $\cB_\pac$. The privacy parameters to $\cA_\samp$ are set to $(\heps,\hdel)$, where  $\heps,\hdel$ will be specified later. This is needed to ensure $(\epsilon,\delta)$-differential privacy for the entire construction. Finally we output the sequence of private labels $\{y_1^\prv,\dots,y_m^\prv\}$ generated by $\cA_\samp$.

%\rnote{edited the above paragraph.}
\begin{algorithm}[H]
 		\caption{$\A_\Agprv$: Private Agnostic-PAC Classification-Query Release Algorithm}
 		\begin{algorithmic}[1]
 			\REQUIRE Private dataset: $S \in (\cX \times \cY)^n$, upper bound on the number of queries: $m$, online sequence of classification queries: $\cQ=(\tx_1, \tx_2, \ldots, \tx_m )$, ~a hypothesis class: $\cH$, ~oracle access to non-private learner: $\cB_\pac$ for $\cH$, ~privacy parameters: $\epsilon,\delta >0$, ~accuracy parameter: $\alpha$, and,~ failure probability: $\beta$
 			\STATE $~ n' \leftarrow \frac{\epsilon}{56} n, \quad ~T \leftarrow \max\left(1,~ \frac{1}{8}\,m\alpha + \frac{1}{4}\sqrt{3\,m \alpha \log\left(\frac{m}{\beta}\right)}\right),$\\
 			$~\epsilon' \leftarrow \alpha\max\left(1,\sqrt{m\alpha}\right), \quad ~\heps \leftarrow \frac{1}{\log(2/\delta)}\min \left(1,~\epsilon'\right), \quad ~ \hdel \leftarrow \frac{\delta}{2~e^{\min (1,~\epsilon')}~\log(2/\delta)}$ \label{Stp:assign}
 			
 			%$\heps \leftarrow \frac{1}{\log(2/\delta)}\min \left(\sqrt{m}\alpha^{3/2}\log\left(\frac{1}{\alpha}\right),~ \epsilon \right), ~\hdel \leftarrow \frac{\delta}{2~e^\epsilon~\log(2/\delta)}$ \label{Stp:assign}
 			
 			\STATE Uniformly sample without replacement a subset $S'$ of $n'$ data points from $S$  \label{Stp:smplS}
			
 			\STATE $S'' \leftarrow \cA_\relb(S',\cH).$ \label{Stp:relb}
 			\STATE $\hS \leftarrow$ Uniformly sample $n'$ points from $S''$ with replacement. \label{Stp:unifSamp}
  			\STATE  Output $(y_1^\prv, \ldots, y_m^\prv) \leftarrow \cA_\samp(\hS,m,\cQ,\cB_\pac,T,\heps,\hdel,\beta)$ \label{Stp:AsubSamp}.
 		\end{algorithmic}
 		\label{Alg:AgnSeqClas}
\end{algorithm}
%\annote{$T \leftarrow \max\left(1,~ \frac{3}{24}\,m\alpha + \frac{3}{4}\sqrt{\frac{m \alpha}{3} \log\left(\frac{m}{\beta}\right)}\right)$. In $\heps$ should the term with $\alpha$ be replaced by $\alpha/24$?}	
%In order to analyze this algorithm where the input distribution is an empirical distribution we define the following disagreement rate error terms:\\
%\annote{Change $\heps$ to include $\max$}
%Next we can use the output of this private algorithm to perform knowledge transfer. Taking as input a sufficiently large query classification set we can use the above private algorithm to generate labels and then use the resulting labels with the feature vectors as a new training set, $\bar{S}$ for any non-private learner.
\noindent We formally state the main result of this section in the following theorem.
\begin{thm}\label{thm:AgPriv}
	
		Let $\cH$ be a hypothesis class with $\vc(\cH)=d$. For any $\epsilon,\delta,\alpha,\beta \in (0,1)$, $\cA_\Agprv$ (Algorithm~\ref{Alg:AgnSeqClas}) is an $(\epsilon,\delta,\alpha,\beta,n,m)$-$\pcqr$ algorithm for $\cH$, where private sample size $$n = O\left(\frac{\left(d \,\log\left(\frac{1}{\alpha}\right) + \log\left(\frac{m}{\beta}\right)\right)\, \log^{3/2}\left(\frac{2}{\delta}\right)\, \log\left(\frac{m\alpha}{\min(\delta,\beta/2)}\right)}{\epsilon ~\alpha^2}\cdot\max \left(1,\sqrt{m}\, \alpha^{3/2}\right)\right),$$
and number of queries $m=\Omega\left(\frac{\log(1/\alpha\beta)}{\alpha}\right)$.
%\rnote{modified the setting of $m$ a bit.}
%\annote{$\log(m/\beta)$ everywhere}
\end{thm}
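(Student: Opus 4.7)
The argument splits into a privacy and an accuracy analysis, each tracing the two-stage pipeline $S \to S' \to S'' \to \hat{S} \to \cA_\samp$.

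\textbf{Privacy.} The plan is to start from the innermost component, $\cA_\samp$ on $\hat{S}$ with parameters $(\hat{\eps},\hat{\delta})$, and propagate privacy outward. The first subtle step is the sampling-with-replacement map $S''\mapsto \hat{S}$: swapping one element of $S''$ alters $\hat{S}$ in $K\sim \Bin(n',1/n')$ positions, which by a Chernoff bound is at most $O(\log(2/\delta))$ except with probability $\delta/2$. Applying group privacy to $\cA_\samp$ at that group size yields an $\hat\eps\cdot O(\log(2/\delta))$ blow-up with a matching factor on $\hat\delta$; the calibration $\hat\eps=\min(1,\epsilon')/\log(2/\delta)$ and $\hat\delta=\delta/(2\,e^{\min(1,\epsilon')}\log(2/\delta))$ in Step~\ref{Stp:assign} is chosen precisely so that this collapses to a $(1,\delta')$-DP guarantee on $S''$ for some $\delta'=O(\delta)$. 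Lemma~\ref{lem:RelabelDP} then upgrades this to $(4,4e\delta')$-DP on $S'$, and the without-replacement subsampling of $S'$ from $S$ at rate $\epsilon/56$ contracts to $(\epsilon,\delta)$-DP on $S$ via the standard subsampling amplification lemma (at $\eps_0=4$, one has $\log(1+q(e^4-1))\approx \epsilon$ for $q=\epsilon/56$).

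\textbf{Accuracy.} Let $\gamma=\min_{h\in\cH}\err(h;\cD)$. By agnostic PAC uniform convergence, $\err(h^\erm_{S'};\cD)\leq \gamma+O(\alpha)$ once $n'\gtrsim d/\alpha^2$, and then Lemma~\ref{lem:RelabelAcc} yields $\err(\whh;\cD)\leq \gamma+O(\alpha)$. Let $\cD''$ denote the empirical distribution of $S''$: by construction $\cD''$ is realizable by $\whh\in\cH$ and $\hat{S}$ consists of $n'$ i.i.d.\ draws from $\cD''$, so Lemma~\ref{lem:privPAC} would give $\frac{1}{m}\sum_j \ind(y^\prv_j\neq \whh(\tx_j))\leq \alpha/4$ \emph{provided} the queries were drawn from $\cD''_\cX$. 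Since they are instead drawn from $\cD_\cX$, I would invoke Claim~\ref{clm:disRate} (uniform convergence of disagreement: $\sup_{h\in\cH}|\dis(h,\whh;\cD_\cX)-\widehat{\dis}(h,\whh;S'_u)|\leq O(\alpha)$ whenever $n'\gtrsim d/\alpha^2$) to transfer the accuracy guarantee from $\cD''_\cX$ to $\cD_\cX$. A triangle inequality combining this with a Chernoff bound on $\frac{1}{m}\sum_j \ind(\whh(\tx_j)\neq \ty_j)$ (which concentrates around $\err(\whh;\cD)\leq \gamma+O(\alpha)$) then delivers $\frac{1}{m}\sum_j \ind(y^\prv_j\neq \ty_j)\leq \gamma+\alpha$ after rebalancing constants, with total failure probability at most $\beta$ by a union bound over the four high-probability events above.

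\textbf{Sample complexity and main obstacle.} Plugging the realizable bound $n'\gtrsim \frac{d}{\hat\eps\,\alpha}\max(1,\sqrt{m\alpha})$ from Lemma~\ref{lem:privPAC} with the calibrated $\hat\eps$ above yields $n'=\tilde{O}(d/\alpha^2)$ when $m\leq 1/\alpha^3$ and $n'=\tilde{O}(d\sqrt{m\alpha^3}/\alpha^2)$ otherwise; together with $n=56n'/\epsilon$ this matches the claimed $n\gtrsim \frac{d}{\epsilon\alpha^2}\max(1,\sqrt{m}\,\alpha^{3/2})$ (the $d/\alpha^2$ requirements from the ERM uniform-convergence step and from Claim~\ref{clm:disRate} are dominated). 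The main technical obstacle is the privacy step across $S''\to\hat{S}$: the group-privacy blow-up is what forces the $\min(1,\epsilon')/\log(2/\delta)$ calibration of $\hat\eps$, and that calibration is exactly what drives the $\sqrt{m}\,\alpha^{3/2}$ factor (improving on the $\sqrt{m\alpha}$ of \cite{bassily2018model}) in the final sample complexity.
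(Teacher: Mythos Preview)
Your plan tracks the paper's proof closely: the privacy argument (group privacy across the sampling-with-replacement step, then Lemma~\ref{lem:RelabelDP}, then subsampling amplification) is exactly the argument of Lemma~\ref{lem:AgSC-DP}, and the accuracy ingredients you list (Lemma~\ref{lem:RelabelAcc}, Claim~\ref{clm:disRate}, Chernoff on the query set, triangle inequality) are the same ones the paper assembles in Lemma~\ref{lem:AgSC-Acc}. Your sample-complexity arithmetic is also correct.

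The one place your sketch would not go through as written is the ``transfer'' step. You propose to invoke Lemma~\ref{lem:privPAC} as a black box---yielding $\frac{1}{m}\sum_j \ind(y^\prv_j\neq \whh(\tx_j))\leq \alpha/4$ \emph{if} the queries came from $\cD''_\cX$---and then use Claim~\ref{clm:disRate} to transfer that guarantee to $\cD_\cX$. But the output labels $y^\prv_j$ are a function of the actual queries $\tx_j\sim\cD_\cX$; there is no guarantee for a hypothetical run on $\cD''_\cX$-queries that you could then port over, and Claim~\ref{clm:disRate} only controls expectations of disagreement indicators, not a coupling of the two query distributions. The paper instead opens up $\cA_\samp$: each of the $k$ internal classifiers $h_j$ (trained on a sub-sample of $\hat S\sim\cD''$) satisfies $\widehat{\dis}(h_j,\whh;S''_u)\leq \alpha/24$; Claim~\ref{clm:disRate} is then applied \emph{per classifier} to obtain $\dis(h_j,\whh;\cD_\cX)\leq \alpha/12$; and only then is the counting/stability argument from the proof of \cite[Theorem~3.2]{bassily2018model} applied directly with queries drawn from $\cD_\cX$. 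In other words, Claim~\ref{clm:disRate} enters inside the $\cA_\samp$ analysis, not as a post-hoc correction outside it. With this adjustment the rest of your plan---including the sample-complexity numbers you lift from Lemma~\ref{lem:privPAC}---goes through unchanged.
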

We will prove the theorem via the following lemmas that establish the privacy and accuracy guarantees of $\cA_\Agprv$.
\begin{lem}[Privacy Guarantee of $\cA_\Agprv$]\label{lem:AgSC-DP}
$\cA_\Agprv$ is $\left(\epsilon,\delta\right)$-differentially private (with respect to its input dataset). %where $\epsilon'=\alpha\max\left(1,\sqrt{m\alpha}\right)$ (as defined in Step~\ref{Stp:assign} of $\A_\Agprv$).
\end{lem}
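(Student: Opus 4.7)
\textbf{Proof plan for Lemma~\ref{lem:AgSC-DP}.} The plan is to track the privacy loss through the four stages of $\cA_\Agprv$ acting on its input $S$: (i) subsampling $S'$ of size $n'$ without replacement (Step~\ref{Stp:smplS}); (ii) relabeling via $\cA_\relb$ (Step~\ref{Stp:relb}); (iii) resampling $\hS$ with replacement from $S''$ (Step~\ref{Stp:unifSamp}); and (iv) invoking $\cA_\samp$ on $\hS$ (Step~\ref{Stp:AsubSamp}). I will reason from the output backwards, pushing the $(\epsilon,\delta)$ guarantee outward one stage at a time.

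From the privacy guarantee of $\cA_\samp$ (Lemma~\ref{lem:privPAC}), the call $\cA_\samp(\hS,\ldots,\heps,\hdel,\ldots)$ is $(\heps,\hdel)$-differentially private with respect to $\hS$. The heart of the proof will be to view ``sample $\hS$ from $S''$ with replacement and then run $\cA_\samp(\hS)$'' as a single randomized algorithm $\cA$ acting on $S''$, and to show that $\cA$ is $(1,\delta')$-DP w.r.t.~$S''$ for some $\delta'\leq \delta$. To this end, I will couple executions on neighboring $S''$ and $S'''$ that differ at a single position $i$ by using the same i.i.d.\ sampling indices $I_1,\ldots,I_{n'}$; under this coupling, $\hS$ and $\hS'$ differ in exactly $X=|\{j:I_j=i\}|$ positions, with $X\sim\Bin(n',1/n')$.

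For a fixed $k$, group privacy for the $(\heps,\hdel)$-DP algorithm $\cA_\samp$ yields a $(k\heps,\,k\,e^{(k-1)\heps}\,\hdel)$-DP guarantee on $k$-differing inputs. I will pick $k^\ast = \Theta(\log(2/\delta))$, so that a Chernoff / Poisson tail bound on $\Bin(n',1/n')$ gives $\Pr[X>k^\ast]\leq \delta/2$. Conditioning on $\{X\leq k^\ast\}$ and adding the tail, for every measurable event $\cO$,
\[
\Pr[\cA(S'')\in\cO] \;\leq\; e^{k^\ast\heps}\,\Pr[\cA(S''')\in\cO] \;+\; k^\ast e^{k^\ast\heps}\hdel \;+\; \Pr[X>k^\ast].
\]
The key calculation will be to verify that the choices $\heps = \min(1,\epsilon')/\log(2/\delta)$ and $\hdel = \delta/\bigl(2\,e^{\min(1,\epsilon')}\,\log(2/\delta)\bigr)$ from Step~\ref{Stp:assign} are engineered precisely so that $k^\ast\heps\leq 1$ and $k^\ast e^{k^\ast\heps}\hdel\leq \delta/2$, yielding $(1,\delta)$-DP of $\cA$ in $S''$.

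Once this is in hand, Lemma~\ref{lem:RelabelDP} applied to the composition $\cA\circ\cA_\relb$ immediately promotes the guarantee to $(4,4e\delta)$-DP with respect to $S'$. Finally, I will translate this into a bound in terms of $S$ via standard privacy amplification by subsampling without replacement at rate $q=n'/n=\epsilon/56$ (Step~\ref{Stp:smplS}): this converts a $(4,4e\delta)$-DP algorithm on $S'$ into an $(\epsilon^\ast,\delta^\ast)$-DP algorithm on $S$ with $\epsilon^\ast\leq \log\!\bigl(1+q(e^4-1)\bigr)\leq q(e^4-1)\leq \epsilon$ (using $(e^4-1)/56<1$) and $\delta^\ast\leq q\cdot 4e\delta \leq \delta$ (using $\epsilon<1$). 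The main technical obstacle is the middle step: quantifying how a one-coordinate change in $S''$ propagates into $\hS$ under the with-replacement draw, and showing that the resulting group-privacy blow-up is absorbed exactly by the $\log(2/\delta)$ factor built into $\heps$ and $\hdel$. The other two ingredients---Lemma~\ref{lem:RelabelDP} and subsampling amplification---will be applied off the shelf.
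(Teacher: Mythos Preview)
Your proposal is correct and follows essentially the same approach as the paper's proof: show that the with-replacement resampling composed with $\cA_\samp$ is $(\min(1,\epsilon'),\delta)$-DP in $S''$ via a coupling on the sampled indices, a Chernoff bound on the number of repetitions of the differing coordinate, and group privacy; then apply Lemma~\ref{lem:RelabelDP} to push through $\cA_\relb$, and finally apply amplification by subsampling at rate $n'/n=\epsilon/56$. Your write-up is in fact more explicit than the paper's in the final amplification step and in verifying that the choices of $\heps,\hdel$ absorb the $\log(2/\delta)$ blow-up; the only minor point is that the privacy of $\cA_\samp$ is more directly cited in the paper from \cite[Theorem~3.1]{bassily2018model} rather than from Lemma~\ref{lem:privPAC}.
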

\begin{proof}
Fix the randomness in dataset $S'$ due to sampling in Step~\ref{Stp:smplS} of $\A_\Agprv$. Let $\cR(\cdot)$ denote the uniform sampling procedure in Step~\ref{Stp:unifSamp} in $\cA_\Agprv$; that is, Step~\ref{Stp:unifSamp} can be written as $\hS \leftarrow \cR(S'')$. Note that Steps~\ref{Stp:unifSamp}-\ref{Stp:AsubSamp} in $\cA_\Agprv$ can now be expressed as a composition $\cR \circ \cA_\samp$, where $\cR \circ \cA_\samp(\cdot) \triangleq \cA_\samp(\cR(\cdot))$.

Let $\epsilon^*=\min \left(1,~\epsilon'\right)$, where $\epsilon'=\alpha\max\left(1,\sqrt{m\alpha}\right)$ (as defined in Step~\ref{Stp:assign} of $\A_\Agprv$). %In order to prove that $\cA_\Agprv$ is $(\epsilon,\delta)$-differentially private, it suffices to to show that $\cR \circ \cA_\samp$  is $(\epsilon^*,\delta)$-differentially private. 
Note that the input to $\cR \circ \cA_\samp$ is the dataset $S''$, which is the output of $\cA_\relb$. Note that if we can show that $\cR \circ \cA_\samp$ is $(\epsilon^*,\delta)$-differentially private, then, from Lemma~\ref{lem:RelabelDP}, it follows that $\cA_\relb \circ \cR \circ \cA_\samp$ is $(\epsilon^*+3,4e\delta)$-differentially private. Next, by taking into account account the randomness due to sampling in Step~\ref{Stp:smplS}, then by privacy amplification via sampling \ifthenelse{\alt=1}{\citep{KLNRS08, li2012sampling}}{\cite{KLNRS08,li2012sampling}}, it follows that $\cA_\Agprv$ is $\left(\epsilon,\delta\right)$-differentially private. Hence, it remains to show that $\cR \circ \cA_\samp$ is $(\epsilon^*,\delta)$-differentially private with respect to $S''$.

Let $S''_1$ and $S''_2$ be neighboring datasets. W.l.o.g., assume that $S''_1$ and $S''_2$  differ in index $j \in [n']$. Let $r$ be the number of times the $j$-th index is sampled by $\cR$. By the definition of $\cR$, and Chernoff bound, w.p. $\geq 1-\delta/2$, we have $r \leq \log(2/\delta)$.

\ifthenelse{\alt=1}{Using the result in \cite[Theorem 3.1][]{bassily2018model}}{Using the result in \cite[Theorem 3.1]{bassily2018model}}, $\cA_\samp$ is $(\heps,\hdel)$-differentially private with respect to $\hS$. 
Conditioned on $r\leq\log(\frac{2}{\delta})$ and by the notion of group privacy we have, $\cR \circ \cA_\samp$ is $(r\heps~,~re^{r\heps}~\hdel)$-differentially private. 
Hence, by the above high probability bound on the event $r\leq\log(\frac{2}{\delta}),$ we conclude that $\cR \circ \cA_\samp$ is $(\min \left(1,~\epsilon'\right),\delta)$-differentially private.%, where $\epsilon' = \alpha\log\left(\frac{1}{\alpha}\right)\max\left(1,\sqrt{m\alpha}\right)$. %This implies that, $\cA_\Agprv$ is $(\epsilon,\delta)$-differentially private.
% By the above fact and the notion of group differential privacy we get Steps \ref{Stp:unifSamp}-\ref{Stp:AsubSamp} is $(r~\heps,\delta)$-differentially private where, $\heps = \min \left(\sqrt{m}\alpha^{3/2},\epsilon \right) \leq \epsilon $.  This shows that $\cA_\Agprv$ is $(\epsilon,\delta)$-differentially private. 

%Fix the randomness in dataset $S'$ due to sampling in Step~\ref{Stp:smplS} of $\A_\relb$ . 
% Now taking into account the randomness due to sampling in Step~\ref{Stp:smplS}. By privacy amplification due to sampling \ifthenelse{\alt=1}{\citep{KLNRS08, li2012sampling}}{\cite{KLNRS08,li2012sampling}}, it follows that $\cA_\Agprv$ is $\left(\epsilon,\frac{4e~\eps}{56}\delta\right)$-differentially private, for $\eps = \Theta(1)$ is $\left(\epsilon,\delta\right)$-differentially private.
\end{proof}

\begin{lem}[Accuracy Guarantee of $\cA_\Agprv$]\label{lem:AgSC-Acc}
    Let $\cH$ be a hypothesis class with $\vc(\cH)=d$. Let $\cB_\pac$ (invoked by $\cA_\samp$) be a PAC learner for $\cH$ (in the realizable setting). Let $\cD$ be any distribution over $\cX \times \cY$, and let $\gamma \triangleq \min\limits_{h\in \cH}\err(h; \cD)$.  ~Let $S \sim \cD^n$ denote the input private sample to $\cA_\Agprv$, where 
    $$n = O\left(\frac{\left(d \,\log\left(\frac{1}{\alpha}\right) + \log\left(\frac{m}{\beta}\right)\right)\, \log^{3/2}\left(\frac{2}{\delta}\right)\, \log\left(\frac{m\alpha}{\min(\delta,\beta/2)}\right)\max\left(1,\sqrt{m}\,\alpha^{3/2}\right)}{\epsilon ~\alpha^2}\right),$$ and $m\geq 8\,\frac{\log(1/\alpha\beta)}{\alpha}$. Let $(\ty_1,\ldots,\ty_m)$ denote the corresponding true (hidden) labels for $\cQ$. Then, w.p. at least $1-\beta$ (over the choice of $S,~\cQ,$ and the randomness in $\cA_\Agprv$), we have: $$\frac{1}{m}\sum_{j=1}^{m} {\ind(y_j^\prv \neq \ty_j)} \leq \alpha + \gamma.$$ %where $\gamma \triangleq \min\limits_{h\in \cH}\err(h; \cD)$.
   %\rnote{modified the setting of $m$ a bit}
\end{lem}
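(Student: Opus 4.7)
The plan is to decompose each per-query error as
\[
\ind(y_j^\prv \neq \ty_j) \;\leq\; \ind(y_j^\prv \neq \whh(\tx_j)) + \ind(\whh(\tx_j) \neq \ty_j),
\]
and bound the two resulting averages separately. The second term measures how much worse $\whh$ (the hypothesis picked by $\cA_\relb$) is than the true labels, while the first term measures how much $\cA_\samp$'s outputs disagree with $\whh$ on the queries. I will budget each piece to contribute at most $\alpha/2$, and union-bound all failure probabilities into a single $\beta$.

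For the second piece, I would first establish $\err(\whh;\cD) \le \gamma + \alpha/4$. Since $S'$ is i.i.d.\ from $\cD$ and $n' = \epsilon n/56 = \widetilde{\Omega}(d/\alpha^2)$, standard agnostic VC uniform convergence yields $\err(h^\erm_{S'};\cD) \le \gamma + \alpha/8$ w.h.p., and Lemma~\ref{lem:RelabelAcc} applied with accuracy $\alpha/8$ yields $\err(\whh;\cD) \le \err(h^\erm_{S'};\cD) + \alpha/8$. A Hoeffding/Chernoff bound over the fresh i.i.d.\ pairs $(\tx_j,\ty_j) \sim \cD^m$ then upgrades this population bound to
\[
\frac{1}{m}\sum_{j=1}^m \ind(\whh(\tx_j) \neq \ty_j) \;\le\; \gamma + \alpha/2,
\]
provided $m = \Omega(\log(1/\beta)/\alpha)$, which matches the stated lower bound on $m$.

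For the first piece, note that $\hS$ consists of $n'$ i.i.d.\ draws from the empirical distribution $\widehat{\cD}'$ of $S''$. Because $S''$ is labelled by $\whh \in \cH$, this distribution is \emph{realizable} by $\cH$ (with realizer $\whh$). The privacy parameters $(\heps,\hdel)$ and the cut-off $T$ in Step~\ref{Stp:assign} are calibrated so that Lemma~\ref{lem:privPAC} (the realizable guarantee of $\cA_\samp$) applies to the inner call in Step~\ref{Stp:AsubSamp} with accuracy $\alpha/4$ and $m$ queries --- this would give the desired disagreement bound between $y_j^\prv$ and $\whh$ \emph{if the queries were drawn from $\widehat{\cD}'_{\cX}$}. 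Since the queries actually come from $\cD_\cX$, I would close the gap via the uniform-convergence argument of Claim~\ref{clm:disRate}: because $S'_u$ is $n' \gtrsim d/\alpha^2$ i.i.d.\ draws from $\cD_\cX$, every $h \in \cH$ simultaneously satisfies $|\dis(h,\whh;\cD_\cX)-\widehat{\dis}(h,\whh;S'_u)| \le \alpha/16$. All internal candidate labellers used by $\cA_\samp$ are produced by the learner $\cB_\pac$ and hence lie in $\cH$, so their empirical-domain agreement with $\whh$ (guaranteed by Lemma~\ref{lem:privPAC}) transfers to population-domain agreement with $\whh$ on $\cD_\cX$. A final Chernoff bound on the sample mean over the $m$ queries then yields $\frac{1}{m}\sum_{j=1}^m \ind(y_j^\prv \neq \whh(\tx_j)) \le \alpha/2$.

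The main obstacle is precisely this mismatch between the training distribution $\widehat{\cD}'$ of $\hS$ and the query distribution $\cD_\cX$: the realizable guarantee of $\cA_\samp$ naturally bounds error against the realizer only on its own training marginal, so additional uniform convergence over $\cH$ is needed to transfer the bound to $\cD_\cX$. This is why $n'$ must simultaneously satisfy three $\widetilde{O}(d/\alpha^2)$ requirements (VC generalization of $h^\erm_{S'}$, Lemma~\ref{lem:RelabelAcc}, and Claim~\ref{clm:disRate}) as well as the realizable sample-complexity requirement of $\cA_\samp$ with privacy parameter $\heps = \min(1,\epsilon')/\log(2/\delta)$, which by Lemma~\ref{lem:privPAC} contributes $\widetilde{O}\bigl(d\max(1,\sqrt{m\alpha})/(\heps\alpha)\bigr)$. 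Combining these with the factor $56/\epsilon$ overhead from the subsampling in Step~\ref{Stp:smplS}, and substituting $\epsilon' = \alpha\max(1,\sqrt{m\alpha})$, reproduces exactly the stated bound on $n$.
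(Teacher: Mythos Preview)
Your proposal is essentially correct and matches the paper's argument: the same triangle-inequality decomposition, the same use of Lemma~\ref{lem:RelabelAcc} plus agnostic VC uniform convergence to bound $\err(\whh;\cD)$, and the same use of Claim~\ref{clm:disRate} to transfer the inner classifiers' agreement with $\whh$ from the empirical marginal $S''_u$ to $\cD_\cX$. The only cosmetic difference is that the paper does not invoke Lemma~\ref{lem:privPAC} as a black box at all; it directly opens up $\cA_\samp$, computes $n'/k$, argues each $h_j$ produced by $\cB_\pac$ has small $\widehat{\dis}(h_j,\whh;S''_u)$, applies Claim~\ref{clm:disRate} to get $\err(h_j;(\cD_\cX,\whh))$ small, and then re-runs the counting/sparse-vector argument from the proof of \cite[Theorem~3.2]{bassily2018model} (not just a single Chernoff bound) to obtain $\frac{1}{m}\sum_j\ind(y_j^\prv\neq\whh(\tx_j))\le\alpha/4$ --- which is exactly what your ``open up and look at the internal candidate labellers'' sentence is gesturing at, just stated more explicitly.
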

In the proof of Lemma~\ref{lem:AgSC-Acc} we will use the following claim. We defer its proof after the proof of the lemma.
\begin{claim}\label{clm:disRate}
	Let $\cH$ be a hypothesis class with $\vc(\cH)=d$. Let $S_u$ be an an unlabeled training set of size $n_o$, where $n_o \geq 50\frac{d\,\log(1/\alpha) + \log (1/\beta')}{\alpha^2}$. Then, with probability at least $1-\beta'$ for any $h_1,h_2 \in \cH$, we have $\left|{\dis}(h_1,h_2;\cD_\cX) - \widehat{\dis}(h_1,h_2;S_u) \right| \leq \alpha$. (Recall that ${\dis}(h_1,h_2;\cD_\cX)$ and $\widehat{\dis}(h_1,h_2;S_u)$ are the expected and empirical disagreement rates, respectively, as defined in Section~\ref{sec:prelim}.)
	
\end{claim}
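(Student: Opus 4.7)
\textbf{Proof proposal for Claim~\ref{clm:disRate}.}
The plan is to recognize the claim as a standard VC-based uniform convergence statement, but applied to the \emph{symmetric-difference} class of $\cH$ rather than to $\cH$ itself. Define the class of binary disagreement indicators
\[
\cG \;\triangleq\; \bigl\{\, g_{h_1,h_2}(x) \,\triangleq\, \ind(h_1(x)\neq h_2(x)) \,:\, h_1,h_2\in\cH\,\bigr\}.
\]
With this notation, $\dis(h_1,h_2;\cD_\cX) = \E_{x\sim\cD_\cX}[g_{h_1,h_2}(x)]$ and $\widehat{\dis}(h_1,h_2;S_u) = \frac{1}{n_o}\sum_{i=1}^{n_o} g_{h_1,h_2}(x_i)$, so the claim reduces to showing uniform convergence of empirical means to expectations over $\cG$.

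The key step is to bound $\vc(\cG)$ in terms of $d=\vc(\cH)$. For any set of $n$ points, the number of distinct labelings achievable by $\cG$ is at most the product over pairs $(h_1,h_2)$ of achievable labelings of $\cH$, hence $\Pi_\cG(n)\leq \Pi_\cH(n)^2$. Sauer's lemma gives $\Pi_\cH(n)\leq (en/d)^d$, so $\Pi_\cG(n)\leq (en/d)^{2d}$. Setting this below $2^n$ and solving yields $\vc(\cG)\leq cd$ for an absolute constant $c$ (concretely one can take $c\leq 10$).

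Next I would invoke the standard VC uniform convergence bound (in the form one finds in, e.g., Anthony--Bartlett or Vapnik--Chervonenkis): with probability at least $1-\beta'$ over an i.i.d.\ sample $S_u\sim\cD_\cX^{n_o}$, every $g\in\cG$ satisfies
\[
\bigl|\,\E_{x\sim\cD_\cX}[g(x)] - \tfrac{1}{n_o}\sum_{i=1}^{n_o} g(x_i)\,\bigr| \;\leq\; \alpha,
\]
provided $n_o \geq C\,\frac{\vc(\cG)\log(1/\alpha)+\log(1/\beta')}{\alpha^2}$ for a suitable absolute constant $C$. Plugging in $\vc(\cG)\leq cd$ and absorbing constants gives the stated hypothesis $n_o\geq 50\,\frac{d\log(1/\alpha)+\log(1/\beta')}{\alpha^2}$. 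This directly yields the desired uniform bound on $|\dis - \widehat{\dis}|$ over all pairs $h_1,h_2\in\cH$.

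The only minor obstacle is bookkeeping the numerical constants so they match the stated $50$; this comes down to choosing the VC-dim-of-$\cG$ bound (via Sauer) and the version of the uniform convergence inequality consistently. Since the proof is a routine application of known tools (Sauer's lemma plus VC uniform convergence), no new ideas are required — the work is purely to verify the constants line up with the claimed sample size.
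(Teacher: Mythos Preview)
Your proposal is correct and follows essentially the same approach as the paper: define the symmetric-difference class $\cH_\Delta$ (your $\cG$), bound its growth function by $\Pi_\cH(n)^2 \le (en/d)^{2d}$ via Sauer's lemma, and apply a standard VC-type uniform convergence bound. The only cosmetic difference is that the paper plugs the growth-function bound directly into the uniform convergence inequality $\Pr[\mathsf{Bad}] \le 2\,\cG_{\cH_\Delta}(n_o)\,e^{-n_o\alpha^2/8}$, whereas you first convert the growth-function bound into $\vc(\cG)\le cd$ and then invoke the VC-dimension form of uniform convergence---both routes yield the same sample-size requirement up to constants.
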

%\rnote{Fixed the expression for $n_o$.}
%\rnote{removed Claim 3.6}

%{\color{red} Need to say that this is the proof of the Lemma next to ``Proof'' .. this is confusing}\annote{Done }

\ifthenelse{\alt=0}{
\begin{proof}[Proof of Lemma~\ref{lem:AgSC-Acc}]~
Consider the description of $\cA_\relb$ in Algorithm~\ref{Alg:relabel}. Note that, hypothesis $\whh \in \cH$ selected in Step~\ref{Stp:expMech} of $\A_\relb$ is used to generate labels of $S''$ (output dataset of $\cA_\relb$). Note that by choosing $n$ to be sufficiently large, we can ensure that the size of $S''$ is given by  
$$n' = 8000\,\frac{\left(d\,\log\left(\frac{1}{\alpha}\right) + \log\left(\frac{m}{\beta}\right)\right)\polylog\left(m,\frac{1}{\delta},\frac{1}{\beta}\right)}{\alpha^2}\cdot\max \left(1,\sqrt{m}\,\alpha^{3/2}\right).$$
Let $\cD_{S''}$ denote the empirical distribution induced by $S''$. Note that $\err(\whh;\cD_{S''}) = 0$. In $\cA_\Agprv$, dataset $\hS$ (input to $\cA_\samp$) is created by $n'$ i.i.d. draws from $\cD_{S''}$.

 From the description of $\cA_\samp$ (Algorithm~\ref{Alg:binClas}), $\cA_\samp$ splits $\hS$ into $k$ equal-sized sub-samples, where $k = \tilde{O}\left(\frac{\sqrt{T}}{\heps}\right)$. Here $T$ is the input cut-off parameter of $\cA_\samp$ whose setting is given in Step~\ref{Stp:assign} of $\cA_\Agprv$. Note that since $m=\Omega\left(\frac{\log(1/\alpha\beta)}{\alpha}\right),$ we have $T=O(m\alpha)$. Each sub-sample is then fed separately as an input to $\cB_\pac$. For each input sub-sample, $\cB_\pac$ outputs a classifier $h_j, j \in [k]$. Hence we end up with an ensemble of classifiers $h_{1},\cdots,h_{k}$. Note that the size of the input sub-sample to $\cB_\pac$ is $\frac{n'}{k}$. Observe that
\begin{align*}%\label{eq:setn'}
    n' = 8000\,\frac{\left(d\,\log\left(\frac{1}{\alpha}\right) + \log\left(\frac{m}{\beta}\right)\right)\,\log^{3/2}\left(\frac{2}{\delta}\right)\, \log\left(\frac{m\alpha}{\min(\delta,\beta/2)}\right)}{\alpha^2}\cdot\max \left(1,\sqrt{m} \,\alpha^{3/2}\right),
\end{align*}
and the number of sub-samples $k$ is set in Step~\ref{Stp:kAssn} of $\cA_\samp$ as follows
\begin{align*}%\label{eq:valk}
    k = O\left(\frac{\sqrt{m\alpha\log(\frac{2}{\delta})}\cdot\log\left(\frac{m\alpha}{\min(\delta, \beta/2)}\right)}{\heps}\right)
\end{align*}
%\annote{Explain setting of $\heps$ so that we can get the required sample size for PAC learner}

\noindent Hence, using the setting of $\heps$ in Step~\ref{Stp:assign} of $\cA_\Agprv$, we have
\begin{align*}
    \frac{n'}{k} &= \Omega\left(\frac{\left(d\,\log\left(\frac{1}{\alpha}\right) + \log\left(\frac{m}{\beta}\right)\right)}{\sqrt{m}\,\alpha^{5/2}}\cdot\min\left(1,\sqrt{m}\, \alpha^{3/2}\right)\cdot\max \left(1,\sqrt{m}\, \alpha^{3/2}\right)\right)\\
    &= \Omega\left(\frac{\left(d \,\log\left(\frac{1}{\alpha}\right) + \log\left(\frac{m}{\beta}\right)\right)}{\alpha}\right) =\Omega\left(\frac{\left(d \,\log\left(\frac{1}{\alpha}\right) + \log\left(\frac{16k}{\beta}\right)\right)}{\alpha}\right).
\end{align*}
\noindent By standard results in learning theory, it is easy to see that the size of the input sub-sample to $\cB_\pac$ is sufficient for $\cB_\pac$ to PAC-learn $\cH$ with respect to $\cD_{S''}$ with accuracy $\frac{\alpha}{24}$ and confidence $\frac{\beta}{16k}$.
%\annote{separate calc for n' and k}

Fix any $j \in [k]$. Using the above fact about $\cB_\pac$, ~w.p. at least $1-\frac{\beta}{16k}$, ~~$\err(h_j;\cD_{S''}) \leq \frac{\alpha}{24}$.
Since $\cD_{S''}$ is the empirical distribution of $S''$, equivalently, we have $\widehat{\dis}(h_j,\whh; S''_u) \leq \frac{\alpha}{24} $, where $S''_u$ is the unlabeled version of $S''$.  
Note that the size of $S''$ is $n' \geq  7200\frac{\left(d\,\log\left(\frac{1}{\alpha}\right) + \log\left(\frac{8k}{\beta}\right)\right)}{\alpha^2}$. Hence, by Claim~\ref{clm:disRate}, it follows that w.p. $\geq 1-\frac{\beta}{8k}$,  ~$\dis(h_j,\whh;\cD_\cX) \leq \frac{\alpha}{12}$. Equivalently, w.p. $\geq 1-\frac{\beta}{8k}$,~~$ \err\left(h_j;(\cD_\cX, \whh)\right) \leq \frac{\alpha}{12}.$
% \begin{align*}%\label{eq:ExpDis}
%     \err\left(h_j;(\cD_\cX, \whh)\right) &\leq \frac{\alpha}{12}
% \end{align*}

From the above and the fact that the queries in $\cQ$ are i.i.d. from $\cD_\cX$, we invoke the same counting argument in the proof of \ifthenelse{\alt=1}{\citep[Theorem 3.2][]{bassily2018model}}{\cite[Theorem 3.2]{bassily2018model}} to show that w.p. $\geq 1-\frac{\beta}{4}$, the output labels of $\cA_\samp$ satisfy:
\begin{align}\label{eq:MisClrate}
    \frac{1}{m} \sum_{i=1}^{m} \ind\left(y_i^\prv \neq \whh(\tx_i)\right) &\leq \frac{\alpha}{4}.
\end{align}
Let $h_{S'}^\erm$ denote the ERM hypothesis with respect to the dataset $S'$ constructed in Step~\ref{Stp:smplS} of $\cA_{\Agprv}$. Note that Lemma~\ref{lem:RelabelAcc} implies that w.p. $\geq 1-\beta/4$, ~$\err(\whh; \cD)-\err\left(h_{S'}^\erm; \cD\right)\leq \alpha/4.$

Since the queries and their true labels $\big((\tx_1, \ty_1), \ldots, (\tx_m, \ty_m)\big)$ are drawn i.i.d. from $\cD$, then by Chernoff's bound and the fact that $m\geq 8\frac{\log(1/\beta)}{\alpha}$, we get that w.p. $\geq 1-\frac{\beta}{2},$ 
\begin{align}
\frac{1}{m} \sum_{i=1}^{m}\ind\left(\whh(\tx_i)\neq\ty_i\right) - \frac{1}{m}\sum_{i=1}^{m}\ind\left(h_{S'}^\erm(\tx_i)\neq\ty_i\right) &\leq \frac{\alpha}{2}.\label{eq:misclass-whh-herm}
\end{align}
Moreover, from the bound on $n'$ and using a basic fact from learning theory, w.p. $\geq 1-\beta/8$, the ERM hypothesis $h_{S'}^\erm$ satisfies: ~$\err\left(h_{S'}^\erm; \cD\right)\leq \alpha/8 + \gamma,$ where $\gamma=\min\limits_{h\in\cH}\err(h; \cD)$.
%$$\err\left(h_{S'}^\erm; \cD\right)\leq \alpha/8 + \gamma,$$
%where $\gamma=\min\limits_{h\in\cH}\err(h; \cD)$.
Again, since $\big((\tx_1, \ty_1), \ldots, (\tx_m, \ty_m)\big)$ are i.i.d. from $\cD$, then by Chernoff's bound and the fact that $m\geq 8\frac{\log(1/\beta)}{\alpha}$, w.p. $\geq 1-\beta/4$, we have
\begin{align}
    \frac{1}{m} \sum_{i=1}^{m} \ind\left(h_{S'}^\erm(\tx_i) \neq \ty_i\right)&\leq \frac{\alpha}{4} +\gamma. \label{eq:misclass-herm}
\end{align}
Now, using (\ref{eq:MisClrate}), (\ref{eq:misclass-whh-herm}), and (\ref{eq:misclass-herm}) together with a simple application of the triangle inequality and the union bound, we conclude that w.p. $\geq 1-\beta$,  ~~~ $\frac{1}{m}\sum_{j=1}^{m} {\ind\left(y_j^\prv \neq \ty_j\right)} \leq \alpha + \gamma.$
% Note that $h_{S'}^\erm$ is the ERM hypothesis with respect to dataset $S'$, where $S' \sim \cD^{n'}$. Using the given private sample size and by Lemma~\ref{lem:RelabelAcc}, w.p. $\geq 1-\frac{\beta}{4}$ we have
% \begin{align}\label{eq:herm}
%  \widehat{\dis}(\whh,h_{S'}^\erm;S_u)  \leq \frac{\alpha}{4} 
% \end{align}
% From (\ref{eq:MisClrate}) and (\ref{eq:herm}) it follows that, \rnote{how does that follow from (\ref{eq:herm})??} w.p. $\geq 1-\frac{\beta}{2}$  
% \begin{align*}
%     \frac{1}{m} \sum_{i=1}^{m} \ind(y_i^\prv \neq h_{S'}^\erm(\tx_i)) \leq \frac{\alpha}{2}
% \end{align*} 
% By the triangle inequality, we have: 
% \begin{align*} \frac{1}{m} \sum_{i=1}^{m} \left( \ind(y_i^\prv \neq \ty_i)\right) &\leq  \frac{1}{m} \sum_{i=1}^{m} \left(\ind(y_i^\prv \neq h_{S'}^\erm(\tx_i)) + \ind(h_{S'}^\erm(\tx_i) \neq \ty_i)\right) \\
% &\leq \frac{\alpha}{2} + \frac{1}{m} \sum_{i=1}^{m} \ind(h_{S'}^\erm(\tx_i) \neq \ty_i)
% \end{align*}
% Note that $((\tx_1,\ty_1),\ldots,(\tx_m,\ty_m))$ are i.i.d. Hence, by Chernoff bound and the assumption that $m \geq \left(\frac{\log(1/\beta)}{\alpha}\right)$, w.p. $\geq 1 - \beta/2$, we have
% \begin{align*}
%      \frac{1}{m} \sum_{i=1}^{m} \ind(h_{S'}^\erm(\tx_i) \neq \ty_i) \leq \err(h_{S'}^\erm;\cD) \leq \frac{\alpha}{2} + \gamma
% \end{align*}
% Putting all these together, we conclude that w.p. $\geq 1- \beta$,
% $$   \frac{1}{m}\sum_{j=1}^{m} {\ind(y_j^\prv \neq \ty_j)} \leq \alpha + \gamma$$
\end{proof}}
{\begin{proof}{\bf of Lemma~\ref{lem:AgSC-Acc}}~
Consider the description of $\cA_\relb$ in Algorithm~\ref{Alg:relabel}. Note that, hypothesis $\whh \in \cH$ selected in Step~\ref{Stp:expMech} of $\A_\relb$ is used to generate labels of $S''$ (output dataset of $\cA_\relb$). Note that by choosing $n$ to be sufficiently large, we can ensure that the size of $S''$ is given by  
$$n' = 8000\,\frac{\left(d\,\log\left(\frac{1}{\alpha}\right) + \log\left(\frac{m}{\beta}\right)\right)\polylog\left(m,\frac{1}{\delta},\frac{1}{\beta}\right)}{\alpha^2}\cdot\max \left(1,\sqrt{m}\,\alpha^{3/2}\right).$$
Let $\cD_{S''}$ denote the empirical distribution induced by $S''$. Note that $\err(\whh;\cD_{S''}) = 0$. In $\cA_\Agprv$, dataset $\hS$ (input to $\cA_\samp$) is created by $n'$ i.i.d. draws from $\cD_{S''}$.

 From the description of $\cA_\samp$ (Algorithm~\ref{Alg:binClas}), $\cA_\samp$ splits $\hS$ into $k$ equal-sized sub-samples, where $k = \tilde{O}\left(\frac{\sqrt{T}}{\heps}\right)$. Here $T$ is the input cut-off parameter of $\cA_\samp$ whose setting is given in Step~\ref{Stp:assign} of $\cA_\Agprv$. Note that since $m=\Omega\left(\frac{\log(1/\alpha\beta)}{\alpha}\right),$ we have $T=O(m\alpha)$. Each sub-sample is then fed separately as an input to $\cB_\pac$. For each input sub-sample, $\cB_\pac$ outputs a classifier $h_j, j \in [k]$. Hence we end up with an ensemble of classifiers $h_{1},\cdots,h_{k}$. Note that the size of the input sub-sample to $\cB_\pac$ is $\frac{n'}{k}$. Observe that
\begin{align*}%\label{eq:setn'}
    n' = 8000\,\frac{\left(d\,\log\left(\frac{1}{\alpha}\right) + \log\left(\frac{m}{\beta}\right)\right)\,\log^{3/2}\left(\frac{2}{\delta}\right)\, \log\left(\frac{m\alpha}{\min(\delta,\beta/2)}\right)}{\alpha^2}\cdot\max \left(1,\sqrt{m} \,\alpha^{3/2}\right),
\end{align*}
and the number of sub-samples $k$ is set in Step~\ref{Stp:kAssn} of $\cA_\samp$ as follows
\begin{align*}%\label{eq:valk}
    k = O\left(\frac{\sqrt{m\alpha\log(\frac{2}{\delta})}\cdot\log\left(\frac{m\alpha}{\min(\delta, \beta/2)}\right)}{\heps}\right)
\end{align*}
%\annote{Explain setting of $\heps$ so that we can get the required sample size for PAC learner}

\noindent Hence, using the setting of $\heps$ in Step~\ref{Stp:assign} of $\cA_\Agprv$, we have
\begin{align*}
    \frac{n'}{k} &= \Omega\left(\frac{\left(d\,\log\left(\frac{1}{\alpha}\right) + \log\left(\frac{m}{\beta}\right)\right)}{\sqrt{m}\,\alpha^{5/2}}\cdot\min\left(1,\sqrt{m}\, \alpha^{3/2}\right)\cdot\max \left(1,\sqrt{m}\, \alpha^{3/2}\right)\right)\\
    &= \Omega\left(\frac{\left(d \,\log\left(\frac{1}{\alpha}\right) + \log\left(\frac{m}{\beta}\right)\right)}{\alpha}\right) =\Omega\left(\frac{\left(d \,\log\left(\frac{1}{\alpha}\right) + \log\left(\frac{16k}{\beta}\right)\right)}{\alpha}\right).
\end{align*}
\noindent By standard results in learning theory, it is easy to see that the size of the input sub-sample to $\cB_\pac$ is sufficient for $\cB_\pac$ to PAC-learn $\cH$ with respect to $\cD_{S''}$ with accuracy $\frac{\alpha}{24}$ and confidence $\frac{\beta}{16k}$.
%\annote{separate calc for n' and k}

Fix any $j \in [k]$. Using the above fact about $\cB_\pac$, ~w.p. at least $1-\frac{\beta}{16k}$, ~~$\err(h_j;\cD_{S''}) \leq \frac{\alpha}{24}$.
Since $\cD_{S''}$ is the empirical distribution of $S''$, equivalently, we have $\widehat{\dis}(h_j,\whh; S''_u) \leq \frac{\alpha}{24} $, where $S''_u$ is the unlabeled version of $S''$.  
Note that the size of $S''$ is $n' \geq  7200\frac{\left(d\,\log\left(\frac{1}{\alpha}\right) + \log\left(\frac{8k}{\beta}\right)\right)}{\alpha^2}$. Hence, by Claim~\ref{clm:disRate}, it follows that w.p. $\geq 1-\frac{\beta}{8k}$,  ~$\dis(h_j,\whh;\cD_\cX) \leq \frac{\alpha}{12}$. Equivalently, w.p. $\geq 1-\frac{\beta}{8k}$,~~$ \err\left(h_j;(\cD_\cX, \whh)\right) \leq \frac{\alpha}{12}.$
% \begin{align*}%\label{eq:ExpDis}
%     \err\left(h_j;(\cD_\cX, \whh)\right) &\leq \frac{\alpha}{12}
% \end{align*}

From the above and the fact that the queries in $\cQ$ are i.i.d. from $\cD_\cX$, we invoke the same counting argument in the proof of \ifthenelse{\alt=1}{\citep[Theorem 3.2][]{bassily2018model}}{\cite[Theorem 3.2]{bassily2018model}} to show that w.p. $\geq 1-\frac{\beta}{4}$, the output labels of $\cA_\samp$ satisfy:
\begin{align}\label{eq:MisClrate}
    \frac{1}{m} \sum_{i=1}^{m} \ind\left(y_i^\prv \neq \whh(\tx_i)\right) &\leq \frac{\alpha}{4}.
\end{align}
Let $h_{S'}^\erm$ denote the ERM hypothesis with respect to the dataset $S'$ constructed in Step~\ref{Stp:smplS} of $\cA_{\Agprv}$. Note that Lemma~\ref{lem:RelabelAcc} implies that w.p. $\geq 1-\beta/4$, ~$\err(\whh; \cD)-\err\left(h_{S'}^\erm; \cD\right)\leq \alpha/4.$

Since the queries and their true labels $\big((\tx_1, \ty_1), \ldots, (\tx_m, \ty_m)\big)$ are drawn i.i.d. from $\cD$, then by Chernoff's bound and the fact that $m\geq 8\frac{\log(1/\beta)}{\alpha}$, we get that w.p. $\geq 1-\frac{\beta}{2},$ 
\begin{align}
\frac{1}{m} \sum_{i=1}^{m}\ind\left(\whh(\tx_i)\neq\ty_i\right) - \frac{1}{m}\sum_{i=1}^{m}\ind\left(h_{S'}^\erm(\tx_i)\neq\ty_i\right) &\leq \frac{\alpha}{2}.\label{eq:misclass-whh-herm}
\end{align}
Moreover, from the bound on $n'$ and using a basic fact from learning theory, w.p. $\geq 1-\beta/8$, the ERM hypothesis $h_{S'}^\erm$ satisfies: ~$\err\left(h_{S'}^\erm; \cD\right)\leq \alpha/8 + \gamma,$ where $\gamma=\min\limits_{h\in\cH}\err(h; \cD)$.
%$$\err\left(h_{S'}^\erm; \cD\right)\leq \alpha/8 + \gamma,$$
%where $\gamma=\min\limits_{h\in\cH}\err(h; \cD)$.
Again, since $\big((\tx_1, \ty_1), \ldots, (\tx_m, \ty_m)\big)$ are i.i.d. from $\cD$, then by Chernoff's bound and the fact that $m\geq 8\frac{\log(1/\beta)}{\alpha}$, w.p. $\geq 1-\beta/4$, we have
\begin{align}
    \frac{1}{m} \sum_{i=1}^{m} \ind\left(h_{S'}^\erm(\tx_i) \neq \ty_i\right)&\leq \frac{\alpha}{4} +\gamma. \label{eq:misclass-herm}
\end{align}
Now, using (\ref{eq:MisClrate}), (\ref{eq:misclass-whh-herm}), and (\ref{eq:misclass-herm}) together with a simple application of the triangle inequality and the union bound, we conclude that w.p. $\geq 1-\beta$,  ~~~ $\frac{1}{m}\sum_{j=1}^{m} {\ind\left(y_j^\prv \neq \ty_j\right)} \leq \alpha + \gamma.$
% Note that $h_{S'}^\erm$ is the ERM hypothesis with respect to dataset $S'$, where $S' \sim \cD^{n'}$. Using the given private sample size and by Lemma~\ref{lem:RelabelAcc}, w.p. $\geq 1-\frac{\beta}{4}$ we have
% \begin{align}\label{eq:herm}
%  \widehat{\dis}(\whh,h_{S'}^\erm;S_u)  \leq \frac{\alpha}{4} 
% \end{align}
% From (\ref{eq:MisClrate}) and (\ref{eq:herm}) it follows that, \rnote{how does that follow from (\ref{eq:herm})??} w.p. $\geq 1-\frac{\beta}{2}$  
% \begin{align*}
%     \frac{1}{m} \sum_{i=1}^{m} \ind(y_i^\prv \neq h_{S'}^\erm(\tx_i)) \leq \frac{\alpha}{2}
% \end{align*} 
% By the triangle inequality, we have: 
% \begin{align*} \frac{1}{m} \sum_{i=1}^{m} \left( \ind(y_i^\prv \neq \ty_i)\right) &\leq  \frac{1}{m} \sum_{i=1}^{m} \left(\ind(y_i^\prv \neq h_{S'}^\erm(\tx_i)) + \ind(h_{S'}^\erm(\tx_i) \neq \ty_i)\right) \\
% &\leq \frac{\alpha}{2} + \frac{1}{m} \sum_{i=1}^{m} \ind(h_{S'}^\erm(\tx_i) \neq \ty_i)
% \end{align*}
% Note that $((\tx_1,\ty_1),\ldots,(\tx_m,\ty_m))$ are i.i.d. Hence, by Chernoff bound and the assumption that $m \geq \left(\frac{\log(1/\beta)}{\alpha}\right)$, w.p. $\geq 1 - \beta/2$, we have
% \begin{align*}
%      \frac{1}{m} \sum_{i=1}^{m} \ind(h_{S'}^\erm(\tx_i) \neq \ty_i) \leq \err(h_{S'}^\erm;\cD) \leq \frac{\alpha}{2} + \gamma
% \end{align*}
% Putting all these together, we conclude that w.p. $\geq 1- \beta$,
% $$   \frac{1}{m}\sum_{j=1}^{m} {\ind(y_j^\prv \neq \ty_j)} \leq \alpha + \gamma$$
\end{proof}}

%\rnote{fixed the proof. was a total mess! Please review carefully!}

%\annote{Do we add a note about the setting of $\heps$}

%\noindent We now prove Claim~\ref{clm:disRate} stated earlier. %Note that the proof technique of this claim is similar to that of \cite[Lemma 3.3]{ABM19}.

%{\color{red} the header of the proof is weird, why repeat ``Proof'' .. fix that}
\ifthenelse{\alt=0}{
\begin{proof}[Proof of Claim~\ref{clm:disRate}]~
	For $S_u \sim \cD_{\cX}^{n_o}$, define the event
	$$ {\sf Bad} = \{\exists h_1,h_2 \in \cH: |{\dis}(h_1,h_2;\cD_\cX) - \widehat{\dis}(h_1,h_2;S_u) | > \alpha \}$$
	We will show that $\pr{S_u \sim \cD_\cX^{n_o}}{{\sf Bad}} \leq 2 \left(\frac{en_o}{d}\right)^{2d} \exp{(-n_o\alpha^2/8)}$. Hence, by using a standard manipulation, one can easily show that the right-hand side is bounded by $\beta'$ when $n_o$ is as given in the statement of the claim.
	
	\noindent Let $\cH_\Delta$ be a hypothesis class defined as $\cH_\Delta \triangleq \{h_1 \Delta h_2 : h_1,h_2 \in \cH\}$, where $h_1 \Delta h_2:\cX \rightarrow \{0,1\}$ is defined as: ~~ $\forall x \in \cX,~~ h_1 \Delta h_2(x) \triangleq \ind(h_1(x) \neq h_2(x)).$
	%$$\forall x \in \cX,~~ h_1 \Delta h_2(x) \triangleq \ind(h_1(x) \neq h_2(x))$$ 
	
	Let $\cGTri$ denote the growth function of $\cH_\Delta$; i.e. for any number $t$, 
	~ $\cGTri(t) \triangleq \max_{V:|V|=t} \left| \prod_{\cH_\Delta}(V)\right|,$
	where $\prod_{\cH_\Delta}(V)$ is the set of all dichotomies that can be generated by $\cH_\Delta$ on a set $V$ of size $t$. Now for any set $V$ of size $t$, every dichotomy in $\prod_{\cH_\Delta}(V)$ is determined by a pair of dichotomies in $\prod_{\cH}(V)$, and thus we get $|\prod_{\cH_\Delta}(V)| \leq |\prod_{\cH}(V)|^2$. Hence, by Sauer's Lemma $\cGTri(t) \leq \cG_\cH(t) \leq \left(\frac{et}{d}\right)^{2d}$. 
	Let $h_0$ be the all-zero hypothesis. Note that $h_0 \in \cH_\Delta$. Now, using a standard $\vc$-based uniform convergence argument we have,
	\begin{align*}
	&\pr{S_u \sim \cD_\cX^{n_o}}{\exists h_1,h_2 \in \cH: |{\dis}(h_1,h_2;\cD_\cX) - \widehat{\dis}(h_1,h_2;S_u) | > \alpha } \\
	&\leq \pr{S_u \sim \cD_\cX^{n_o}}{\exists h \in \cH_\Delta: |{\dis}(h,h_0;\cD_\cX) - \widehat{\dis}(h,h_0);S_u) | > \alpha } \\
	&\leq 2\cGTri \exp{(-n_o\alpha^2/8)} 
	\leq 2 \left(\frac{en_o}{d}\right)^{2d} \exp{(-n_o\alpha^2/8)}
	\end{align*}
	Note that the first inequality in the third line is non-trivial, and is used unanimously in VC-based uniform convergence bounds \ifthenelse{\alt=1}{\cite[see e.g.][]{shalev2014understanding}}{(see e.g., \cite{shalev2014understanding})}. %\rnote{Give a warning that third step is non-trivial and cite Shalev's textbook or Vapnik's original paper}
\end{proof}
}
{\begin{proof}{\bf of Claim~\ref{clm:disRate}}~
	For $S_u \sim \cD_{\cX}^{n_o}$, define the event
	$$ {\sf Bad} = \{\exists h_1,h_2 \in \cH: |{\dis}(h_1,h_2;\cD_\cX) - \widehat{\dis}(h_1,h_2;S_u) | > \alpha \}$$
	We will show that $\pr{S_u \sim \cD_\cX^{n_o}}{{\sf Bad}} \leq 2 \left(\frac{en_o}{d}\right)^{2d} \exp{(-n_o\alpha^2/8)}$. Hence, by using a standard manipulation, one can easily show that the right-hand side is bounded by $\beta'$ when $n_o$ is as given in the statement of the claim.
	
	\noindent Let $\cH_\Delta$ be a hypothesis class defined as $\cH_\Delta \triangleq \{h_1 \Delta h_2 : h_1,h_2 \in \cH\}$, where $h_1 \Delta h_2:\cX \rightarrow \{0,1\}$ is defined as: ~~ $\forall x \in \cX,~~ h_1 \Delta h_2(x) \triangleq \ind(h_1(x) \neq h_2(x)).$
	%$$\forall x \in \cX,~~ h_1 \Delta h_2(x) \triangleq \ind(h_1(x) \neq h_2(x))$$ 
	
	Let $\cGTri$ denote the growth function of $\cH_\Delta$; i.e. for any number $t$, 
	~ $\cGTri(t) \triangleq \max_{V:|V|=t} \left| \prod_{\cH_\Delta}(V)\right|,$
	where $\prod_{\cH_\Delta}(V)$ is the set of all dichotomies that can be generated by $\cH_\Delta$ on a set $V$ of size $t$. Now for any set $V$ of size $t$, every dichotomy in $\prod_{\cH_\Delta}(V)$ is determined by a pair of dichotomies in $\prod_{\cH}(V)$, and thus we get $|\prod_{\cH_\Delta}(V)| \leq |\prod_{\cH}(V)|^2$. Hence, by Sauer's Lemma $\cGTri(t) \leq \cG_\cH(t) \leq \left(\frac{et}{d}\right)^{2d}$. 
	Let $h_0$ be the all-zero hypothesis. Note that $h_0 \in \cH_\Delta$. Now, using a standard $\vc$-based uniform convergence argument we have,
	\begin{align*}
	&\pr{S_u \sim \cD_\cX^{n_o}}{\exists h_1,h_2 \in \cH: |{\dis}(h_1,h_2;\cD_\cX) - \widehat{\dis}(h_1,h_2;S_u) | > \alpha } \\
	&\leq \pr{S_u \sim \cD_\cX^{n_o}}{\exists h \in \cH_\Delta: |{\dis}(h,h_0;\cD_\cX) - \widehat{\dis}(h,h_0);S_u) | > \alpha } \\
	&\leq 2\cGTri \exp{(-n_o\alpha^2/8)} 
	\leq 2 \left(\frac{en_o}{d}\right)^{2d} \exp{(-n_o\alpha^2/8)}
	\end{align*}
	Note that the first inequality in the third line is non-trivial, and is used unanimously in VC-based uniform convergence bounds \ifthenelse{\alt=1}{\cite[see e.g.][]{shalev2014understanding}}{(see e.g., \cite{shalev2014understanding})}. %\rnote{Give a warning that third step is non-trivial and cite Shalev's textbook or Vapnik's original paper}
\end{proof}}

\section{Privately Answering Any Number of Classification Queries}\label{sec:privLearn}
In this section, we describe an universal $\pcqr$ algorithm that can answer \emph{any} number of queries with private sample size that is independent of the number of queries.
The main idea is that after answering a number of queries $\approx \frac{\vc(\cH)}{\alpha}$, we can use the feature-vectors defining those queries as an auxiliary ``public'' dataset. Recall that as defined earlier in our problem statement, the set of queries themselves do not entail any privacy constraints. We can then invoke the framework of \textit{semi-private learning} introduced \ifthenelse{\alt=1}{by}{in} \cite{beimel2013private}, where such auxiliary public dataset can be exploited to finally generate a classifier that is safe to publish. In particular, a semi-private learner takes as input two types of datasets: a private labeled dataset, and another auxiliary public dataset. The algorithm needs to satisfy differential privacy only with respect to the private dataset. 
%\annote{add $\cA_\sspp$ description and reference to appendix}
\ifthenelse{\alt=1}{}{Alon et al.}\cite{ABM19} describe a construction of a semi-private learner (referred to as $\cA_\sspp$), and show that it suffices to have a public unlabeled dataset of size $\approx \frac{\vc(\cH)}{\alpha}$ to privately learn any hypothesis class $\cH$ with excess error $\alpha$ in the \emph{agnostic} setting. In particular, for any distribution $\cD$ over $\cX \times \cY$, given a set of feature-vectors of size $\approx \frac{\vc(\cH)}{\alpha}$ drawn i.i.d. from $\cD_\cX$, and a private labeled training set of $\approx \frac{\vc(\cH)}{\epsilon~\alpha^2}$ drawn i.i.d. from $\cD$, $\cA_\sspp$ outputs a classifier $h_\prv \in \cH$ such that $\err(h_\prv;\cD) - \min\limits_{h\in \cH}\err(h; \cD) \leq \alpha$ w.r.t $\cD$. Hence, $\cA_\sspp$ outputs a  classifier that can be used to answer any number of subsequent classification queries. For the sake of completeness, we give a formal description of $\cA_\sspp$ in Appendix~\ref{apndx:subsamp}.
%\annote{move to appendix and definition of $\alpha$-cover}
% private sample size that is essentially a factor of $\max\left(1,\frac{\sqrt{d}~\alpha}{\epsilon}\right)$ more than that of the standard non-private sample size of agnostic PAC learning. This shows that the more accurate our classifier is, we essentially do not pay any extra cost for privacy.  
% This universal private classification algorithm combines our result in Section~\ref{sec:AgSecCl} with the results of \cite{ABM19} for semi-private learning. The input to the algorithm is a private training set $S=\{(x_1,y_1),\ldots,(x_n,y_n)\}$, a sequence of classification queries $Q = (\tx_1, \tx_2, \ldots,)$ and a hypothesis class $\cH$. For roughly $d/\alpha$ queries we invoke $\cA_\agprv$ to privately answer the classification queries. Then these $\approx d/\alpha$ feature vectors together with $S$ and $\cH$ are given as input to $\cA_\sspp$, the algorithm  as described in \cite[Algorithm 1]{ABM19}. Algorithm $\cA_\sspp$ uses these feature vectors to create an $\alpha$-cover for $\cH$. An $\alpha$-cover as inspired by \cite{beimel2013private} reduces private learning of $\cH$ to private learning of a finite class of $\cH$, which we know how to do using the exponential mechanism.
% Finally $\cA_\sspp$ outputs a private classifier that is safe to publish and is $(\epsilon,\delta)$-differentially private and hence can be used to answer indefinite number of classification queries. 

Using this result, we can extend our construction in Section~\ref{sec:AgSecCl} to allow  for privately answering any number of classification queries using a private training set whose size is independent of the number of queries. In Algorithm~\ref{Alg:UnivSeqClass} below (denoted as $\cA_\uniprv$), we describe our universal $\pcqr$ algorithm.  
\begin{algorithm}[H]
 		\caption{$\cA_{\uniprv}$: Universal Private Classification-Query Release Algorithm}
 		\begin{algorithmic}[1]
 			\REQUIRE Private dataset: $S \in U^n$, number of queries: $m$, online sequence of classification queries: $\cQ=(\tx_1, \ldots ,\tx_m)$, hypothesis class: $\cH$, oracle access to a non-private PAC learner for $\cH$: $\cB_\pac$, ~privacy parameters $\epsilon,\delta >0$,~ accuracy parameter $\alpha$, and failure probability $\beta$.
 			\STATE $m_o \leftarrow 32\,\frac{d\log(1/\alpha) + \log(1/\beta)}{\alpha}, ~~m' \leftarrow \min(m_o,m)$ \label{stp:initialmo}
 			\STATE Output $(y^\prv_1, \ldots, y^\prv_{m'})  \leftarrow \cA_\Agprv\bigg(S, ~m', ~(\tx_1,\ldots,\tx_{m'}),~\cH,~ \cB_\pac, ~\epsilon, ~\delta, ~\alpha, ~\beta\bigg)$
 			\IF{$m'=m_o$}
 			\STATE $T_\pub \leftarrow (\tx_1, \ldots,\tx_{m_o})$
 			\STATE $h_\prv \leftarrow \cA_{\sspp}(S,T_\pub,\cH,\epsilon)$ %\COMMENT{As appears in \cite[Section 3]{ABM19}}
	        \FOR{$j=m_o+1, \ldots, m$}
	            \STATE Output $y^\prv_j \leftarrow h_\prv(\tx_j)$
	       \ENDFOR
	       \ENDIF
 		\end{algorithmic}
 		\label{Alg:UnivSeqClass}
\end{algorithm}
 %\annote{remove for loop and have one stmt}	

 \noindent We finally formalize this observation in the following theorem.
 \begin{thm}\label{thm:univSam}
 Let $\cH$ be any hypothesis class with $\vc(\cH)=d$. For any $\epsilon,\delta,\alpha,\beta \in (0,1)$ and any $m < \infty$, $\cA_\uniprv$ is an $(\epsilon,\delta,\alpha,\beta,n,m)$-$\pcqr$ algorithm for $\cH$ with private sample size
  $$n = O\left(\frac{\left(d\,\log\left(\frac{1}{\alpha}\right) + \log\left(\frac{m_o}{\beta}\right)\right)\, \log^{3/2}\left(\frac{2}{\delta}\right)\, \log\left(\frac{m_o\alpha}{\min(\delta,\beta/2)}\right)}{\epsilon ~\alpha^2}\cdot\max \left(1,\sqrt{d}~\alpha\log^{1/2}\left(\frac{1}{\alpha}\right)\right)\right),$$
  where $m_o = O\left(\frac{d\log(1/\alpha) + \log(1/\beta)}{\alpha}\right)$ (as set in Step~\ref{stp:initialmo}). In particular, when $\alpha \leq \frac{1}{\sqrt{d}}$, it would suffice to have a private sample of size $n = \tilde{O}\left(\frac{d}{\epsilon ~\alpha^2}\right)$.
\end{thm}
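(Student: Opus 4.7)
I split on whether $m\leq m_o$ or $m>m_o$. If $m\leq m_o$, then $\cA_\uniprv$ is nothing but a single call to $\cA_\Agprv$ on the full input, so Theorem~\ref{thm:AgPriv} delivers both the privacy and accuracy conclusions directly. To get the claimed sample bound, I specialize Lemma~\ref{lem:AgSC-Acc} with the effective number of queries $m\leq m_o=\tilde O(d/\alpha)$: then $\sqrt{m}\,\alpha^{3/2}\leq\tilde O(\sqrt{d}\,\alpha)$, so the max in Theorem~\ref{thm:AgPriv} collapses to the one in Theorem~\ref{thm:univSam}. All remaining work is for the case $m>m_o$.

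\textbf{Privacy.} I invoke $\cA_\Agprv$ with parameters $(\eps/2,\delta/2)$ and $\cA_\sspp$ with parameter $\eps/2$; halving the budget inflates the sample size only by a constant absorbed in the $O(\cdot)$. By Lemma~\ref{lem:AgSC-DP} the first phase is $(\eps/2,\delta/2)$-DP with respect to $S$, and by the guarantee of \cite{ABM19} the semi-private learner is $(\eps/2,0)$-DP with respect to $S$ (the public vectors $T_\pub$ carry no privacy cost). Basic composition yields $(\eps,\delta/2)$-DP for the joint release of $(y^\prv_1,\dots,y^\prv_{m_o})$ and $h_\prv$. Every label emitted in rounds $j>m_o$ is a deterministic function of $h_\prv$ and $\tx_j$, hence pure post-processing, so the whole mechanism is $(\eps,\delta)$-DP.

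\textbf{Accuracy.} I invoke $\cA_\Agprv$ with accuracy $\alpha/3$ and failure probability $\beta/3$ and $\cA_\sspp$ with the same parameters. The sample-size requirement of Lemma~\ref{lem:AgSC-Acc}, instantiated with the query budget $m_o$ in place of $m$, simplifies (using $\sqrt{m_o}\,\alpha^{3/2}=\tilde O(\sqrt{d}\,\alpha)$) to exactly the bound claimed in Theorem~\ref{thm:univSam}; the same $n$ satisfies the requirement of \cite{ABM19} and $|T_\pub|=m_o=\tilde O(d/\alpha)$ meets its public-sample requirement. On the good event (probability $\geq 1-2\beta/3$),
\[
\frac{1}{m_o}\sum_{j=1}^{m_o}\ind\bigl(y^\prv_j\neq \ty_j\bigr)\leq \frac{\alpha}{3}+\gamma \qquad\text{and}\qquad \err(h_\prv;\cD)\leq \frac{\alpha}{3}+\gamma.
\]
Conditioned on $h_\prv$, the remaining $k:=m-m_o$ indicators $\ind(h_\prv(\tx_j)\neq\ty_j)$ are i.i.d.\ Bernoulli with mean $\mu\leq \alpha/3+\gamma$. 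A concentration bound on their sum, combined with the two displayed inequalities and divided by $m$, yields an overall error rate at most $\alpha+\gamma$ with probability $\geq 1-\beta$.

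\textbf{Main obstacle.} The delicate step is the concentration over the last $k$ queries, because $m>m_o$ does not guarantee $m=\Omega(\log(1/\beta)/\alpha^2)$, which a plain Hoeffding bound would require. I would therefore apply a Bernstein-type inequality whose variance factor $\sigma^2\leq \mu\leq \alpha/3+\gamma$ makes the deviation degrade only like $\sqrt{\log(1/\beta)/(m\alpha)}$; this is absorbed into the $\alpha$-budget as soon as $m\geq m_o=\Omega(\log(1/\beta)/\alpha)$, which holds in this case. Beyond this one observation the remainder of the argument is a routine union-bound accounting and the verification, already sketched above, that the stated $n$ simultaneously discharges the sample requirements of both sub-routines.
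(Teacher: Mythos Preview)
The paper does not supply an explicit proof of Theorem~\ref{thm:univSam}; the statement is presented as a direct consequence of Theorem~\ref{thm:AgPriv} together with the semi-private learner guarantee of \cite{ABM19}. Your decomposition---case split on $m\lessgtr m_o$, basic composition for privacy, post-processing for the labels after round $m_o$, and the sample-size bookkeeping via $\sqrt{m_o}\,\alpha^{3/2}=\tilde O\bigl(\sqrt{d}\,\alpha\log^{1/2}(1/\alpha)\bigr)$---is precisely the intended route, and your remark that the privacy budget must be split between $\cA_\Agprv$ and $\cA_\sspp$ is a necessary detail that Algorithm~\ref{Alg:UnivSeqClass} as written elides.

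There is, however, a genuine gap in the concentration step you flag as the ``main obstacle.'' You assert that Bernstein with $\sigma^2\le\mu\le \alpha/3+\gamma$ gives a deviation of order $\sqrt{\log(1/\beta)/(m\alpha)}$. But the variance proxy is $\alpha/3+\gamma$, not $\alpha$: Bernstein on the sum over the last $k\le m$ queries actually yields
\[
\Pr\Bigl[\tfrac{1}{m}\textstyle\sum_{j>m_o}Z_j-\tfrac{k\mu}{m}>c\alpha\Bigr]
\le \exp\Bigl(-\,\Omega\bigl(\tfrac{m\alpha^2}{\alpha+\gamma}\bigr)\Bigr).
\]
When $\gamma=\Theta(1)$---the very regime motivating the paper---this is $\exp(-\Theta(m\alpha^2))$, which is at most $\beta$ only if $m=\Omega(\log(1/\beta)/\alpha^2)$. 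Since you only know $m>m_o=\Theta\bigl((d\log(1/\alpha)+\log(1/\beta))/\alpha\bigr)$, your argument does not close the gap for $m_o<m\lesssim \log(1/\beta)/\alpha^2$ whenever $\gamma\gg\alpha$. In fairness, the identical issue already appears in the derivation of inequality~(\ref{eq:misclass-herm}) inside the proof of Lemma~\ref{lem:AgSC-Acc} (the ``Chernoff's bound and $m\ge 8\log(1/\beta)/\alpha$'' step), so you are inheriting rather than introducing it; but your proposed Bernstein fix does not resolve it.
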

%\annote{Extension would be try and remove condition on $\alpha$}
%\annote{When $m<m_o$ case: by \cite{ABM19} any semi-private learner for $\cH$ must have public sample of size $\approx 1/\alpha$.}
% \annote{As footnote Applying the results of \cite{ABM19} to results from \cite{bassily2018model}, the private sample complexity bound for the outputting a classifier is $\approx \frac{\vc(\cH)}{\alpha^2}\cdot\max\left(1,\sqrt{\vc(\cH)}\right)$.}
 
%  It is worth mentioning that applying the same technique (the semi-private learner of \cite{ABM19}) to the construction of \cite{bassily2018model} also yields a universal PCQR algorithm but with a worse sample complexity bound than ours. In particular, it is not hard to see that the resulting sample complexity bound based on the construction of \cite{bassily2018model} is $\tilde{O}\left(\frac{\vc(\cH)}{\alpha^2}\cdot\max\left(1,\sqrt{\vc(\cH)}\right)\right),$ where $\tilde{O}$ hides $\polylog$ factors in $(1/\alpha, 1/\beta, 1/\delta)$. Our bound is tighter by a factor of $\approx \alpha$ when $\vc(\cH)=\omega(1)$. 
%\medskip 

\paragraph{Near-optimality of our sample complexity bound:} Note that without any privacy constraints, the sample complexity of this problem in the agnostic PAC setting is $\Theta\left(\left(\frac{\vc(\cH) + \log\left(1/\beta\right)}{\alpha^2}\right)\right)$. Note that this follows from the standard agnostic PAC learning bound and the fact that access to unlabeled data (the set of queries) does not improve the sample complexity (\cite[Theorem 15]{ben2008does}, unless one makes assumptions about the conditional distribution of the true label given the unlabeled domain point). Now, when $\alpha=O\left(\frac{1}{\sqrt{\vc(\cH)}}\right)$, and assuming $\epsilon = \Theta(1)$, our private sample complexity bound in Theorem~\ref{thm:univSam} nearly matches the non-private sample complexity. This shows that our bound is optimal (up to log factors) in that parameters regime\footnote{Note that the accuracy of $\cA_\uniprv$ is defined in terms of the average misclassification rate over the given set of queries rather the expected classification error, however, since the queries are i.i.d., it is easy to see that the two accuracy definitions are essentially equivalent (by applying a standard concentration argument).}. %whenever $\alpha=O\left(\frac{1}{\sqrt{\vc(\cH)}}\right)$

\begin{remark}
 It is worth mentioning that applying the same technique (the semi-private learner \ifthenelse{\alt=1}{of}{given in} \cite{ABM19}) to the construction of \ifthenelse{\alt=1}{\citep{bassily2018model}}{\cite{bassily2018model}} also yields a universal PCQR algorithm but with a worse sample complexity bound than ours. In particular, it is not hard to see that the resulting sample complexity bound based on the construction \ifthenelse{\alt=1}{by}{of} \cite{bassily2018model} is $\tilde{O}\left(\frac{\vc(\cH)}{\alpha^2}\cdot\max\left(1,\sqrt{\vc(\cH)}\right)\right),$ where $\tilde{O}$ hides $\polylog$ factors in $(1/\alpha, 1/\beta, 1/\delta)$. Our bound is tighter by a factor of $\approx \alpha$ when $\vc(\cH)=\omega(1)$. 
\end{remark}

\ifnum\alt=1
%\newpage
\acks{The authors would like to thank Uri Stemmer, Amos Beimel, and Kobbi Nissim for pointing us to their elegant LabelBoost procedure in \cite{BeimelNS14} which is the crux of the pre-processing step of our algorithm. We are also grateful to them for the several insightful discussions we had about this line of research. We also thank Vitaly Feldman for his helpful comments on the manuscript, and the anonymous reviewer for pointing out the fact that we can save a factor of $1/\epsilon$ ~(in an earlier version of the bounds) via a small tweak in the argument.
}

\vskip 0.2in
\bibliography{reference}
\else
\section*{Acknowledgements}
The authors would like to thank Uri Stemmer, Amos Beimel, and Kobbi Nissim for pointing us to their elegant LabelBoost procedure in \cite{BeimelNS14} which is the crux of the pre-processing step of our algorithm. We are also grateful to them for the several insightful discussions we had about this line of research. We also thank Vitaly Feldman for his helpful comments on the manuscript, and the anonymous reviewer for pointing out the fact that we can save a factor of $1/\epsilon$ ~(in an earlier version of the bounds) via a small tweak in the argument.

\bibliographystyle{alpha} 
\bibliography{reference}
\fi
\newpage
\appendix
\section{Proof of Lemma \ref{lem:RelabelAcc}}\label{apndx:prfRelb}
% 	\begin{proof}\ifthenelse{\alt=1}{{\bf of Lemma~\ref{lem:RelabelDP}}}{[Proof of Lemma~\ref{lem:RelabelDP}]}
		
% 		The proof follows from a straightforward combination of \cite[Lemma~4.1]{BeimelNS14} and privacy amplification by sampling \ifthenelse{\alt=1}{\citep{KLNRS08, li2012sampling}}{\cite{KLNRS08,li2012sampling}}. For completeness, we give an outline here. Fix the randomness in dataset $S'$ due to sampling in Step~\ref{Stp:smplS} of $\A_\relb$ . In this case, using \ifthenelse{\alt=1}{~\citep[Lemma~4.1][]{BeimelNS14}}{\cite[Lemma~4.1]{BeimelNS14}}, any algorithm $\cB$ that on input a dataset $S'$ applies $\cA$ on the outcome of $\cA_\relb$ is $\left(\epsilon+3,4e\delta\right)$- differentially private. %Hence, $\cB$ is $\left(\epsilon+3,4e\delta\right)$-differentially private. 
% 		Now we take into account the randomness due to sampling in Step~\ref{Stp:smplS}. By privacy amplification due to sampling \ifthenelse{\alt=1}{\citep{KLNRS08, li2012sampling}}{\cite{KLNRS08,li2012sampling}}, it follows that $\cB$ is $\left(\epsilon,\delta\right)$-differentially private.
% 	\end{proof}
%\annote{Look into proof of \ifthenelse{\alt=1}{~\citep[Lemma~4.1][]{BeimelNS14}}{\cite[Lemma~4.1]{BeimelNS14}}}

\ifthenelse{\alt=0}{\begin{proof}[Proof of Lemma~\ref{lem:RelabelAcc}]

  Note that the score function for the exponential mechanism is $-\herr(h; S')$ whose global sensitivity is $1/n'$. %Thus $h_{S'}^\erm$ is the hypothesis with the "best" score with respect to $S'$.
%   Note that the sensitivity of the score function $q(S', h)$ is $1$. Note also that $\widehat{\err}(h,S') = -\frac{1}{n'}q(S', h)$, and hence we get, $\widehat{\err}(h,S') - \widehat{\err}(h_{S'}^\erm,S')  = \frac{1}{n'}\left( q(S',h_{S'}^\erm) - q(S',\whh)\right) $. 
  %Also note that $h_{S'}^\erm$ is the hypothesis with the "best" score with respect to $S'$. 
  Now, by using the standard accuracy guarantees of exponential mechanism of \ifthenelse{\alt=1}{\citep{MT07}}{\cite{MT07}} (and the fact that its instantiated here with privacy parameter $=1$), w.p. $\geq 1-\beta/3$ we have
  $$\herr\left(\whh; S'\right)- \herr\left(h_{S'}^\erm; S'\right)\leq  \frac{2}{n'}\left(\log\left(|\widetilde{H}|\right) + \log(\frac{3}{\beta})\right).$$
 %Given the value of $n$ in the lemma statement, we have $n' \geq 256\frac{d + \log(3/\beta)}{\alpha^2}$. 
 Using the value of $n'$ given in the lemma statement, together with Sauer's Lemma \ifthenelse{\alt=1}{\citep{sauer1972density}}{\cite{sauer1972density}} to bound the size of $\tilde{H}$, it follows that:
 \begin{align}
\herr\left(\whh; S'\right)- \herr\left(h_{S'}^\erm; S'\right)  &\leq \frac{2}{n'}\left(d \log\left(\frac{en'}{d}\right)+ \log\left(\frac{3}{\beta}\right)\right)\nonumber\\
&\leq \frac{80\alpha^2~\left(d\log(1/\alpha) + \log(3/\beta)\right)}{256(\,d + \log(3/\beta))}\nonumber \\
%&\leq \frac{\alpha^2}{\epsilon~(d + \log(1/\beta))} \left( d~\left(\log(1/\alpha) + \log\left(1 + \frac{\log(1/\beta)}{d}\right)\right)+ \log(1/\beta)\right)\\
%&\lessapprox \frac{\alpha^2}{\epsilon~(d + \log(1/\beta))} \left( d~\log(1/\alpha)+ \log(1/\beta)\right)\\
&\leq \alpha/3.\label{ineq:emp-errors}
\end{align}
Given the bound on $n'$ and the fact that $S'\sim\cD^{n'}$, by a standard uniform convergence argument from learning theory \ifthenelse{\alt=1}{\citep{shalev2014understanding}}{\cite{shalev2014understanding}}, we have the following generalization error bounds. With probability $\geq 1-2\beta/3,$ we have:
\begin{align}
    \lvert\err(\whh; \cD)-\herr(\whh; S')\rvert&\leq \alpha/3,\label{ineq:gen-err-whh}\\
    \lvert\err\left(h_{S'}^\erm; \cD\right)-\herr\left(h_{S'}^\erm; S'\right)\rvert&\leq \alpha/3\label{ineq:gen-err-herm}
\end{align}
Putting (\ref{ineq:emp-errors})-(\ref{ineq:gen-err-herm}) together, we conclude that w.p. $\geq 1-\beta,$ we have $\err\left(\whh; \cD\right)- \err\left(h_{S'}^\erm; \cD\right)\leq \alpha.$ This completes the proof.
%\rnote{fixed the proof.}
%\rnote{derive the bound without O() notation. The lemma statement gives an exact $\alpha$ bound}
%\widehat{\dis}(\whh,h_{S'}^\erm;S_u) \leq 
% Note that $S_u$ is the unlabeled version of $S'$, and $\widehat{\dis}(\whh,h_{S'}^\erm;S_u)=\widehat{\err}(\whh,S') - \widehat{\err}(h_{S'}^\erm,S')$. \rnote{This equality between empirical disgreement and difference between the empirical errors is incorrect.}Hence, w.p. $\geq 1-\beta$, $\widehat{\dis}(\whh,h_{S'}^\erm;S_u) \leq \alpha$. 
\end{proof}}
{\begin{proof}{\bf of Lemma~\ref{lem:RelabelAcc}}

  Note that the score function for the exponential mechanism is $-\herr(h; S')$ whose global sensitivity is $1/n'$. %Thus $h_{S'}^\erm$ is the hypothesis with the "best" score with respect to $S'$.
%   Note that the sensitivity of the score function $q(S', h)$ is $1$. Note also that $\widehat{\err}(h,S') = -\frac{1}{n'}q(S', h)$, and hence we get, $\widehat{\err}(h,S') - \widehat{\err}(h_{S'}^\erm,S')  = \frac{1}{n'}\left( q(S',h_{S'}^\erm) - q(S',\whh)\right) $. 
  %Also note that $h_{S'}^\erm$ is the hypothesis with the "best" score with respect to $S'$. 
  Now, by using the standard accuracy guarantees of exponential mechanism of \ifthenelse{\alt=1}{\citep{MT07}}{\cite{MT07}} (and the fact that its instantiated here with privacy parameter $=1$), w.p. $\geq 1-\beta/3$ we have
  $$\herr\left(\whh; S'\right)- \herr\left(h_{S'}^\erm; S'\right)\leq  \frac{2}{n'}\left(\log\left(|\widetilde{H}|\right) + \log(\frac{3}{\beta})\right).$$
 %Given the value of $n$ in the lemma statement, we have $n' \geq 256\frac{d + \log(3/\beta)}{\alpha^2}$. 
 Using the value of $n'$ given in the lemma statement, together with Sauer's Lemma \ifthenelse{\alt=1}{\citep{sauer1972density}}{\cite{sauer1972density}} to bound the size of $\tilde{H}$, it follows that:
 \begin{align}
\herr\left(\whh; S'\right)- \herr\left(h_{S'}^\erm; S'\right)  &\leq \frac{2}{n'}\left(d \log\left(\frac{en'}{d}\right)+ \log\left(\frac{3}{\beta}\right)\right)\nonumber\\
&\leq \frac{80\alpha^2~\left(d\log(1/\alpha) + \log(3/\beta)\right)}{256(\,d + \log(3/\beta))}\nonumber \\
%&\leq \frac{\alpha^2}{\epsilon~(d + \log(1/\beta))} \left( d~\left(\log(1/\alpha) + \log\left(1 + \frac{\log(1/\beta)}{d}\right)\right)+ \log(1/\beta)\right)\\
%&\lessapprox \frac{\alpha^2}{\epsilon~(d + \log(1/\beta))} \left( d~\log(1/\alpha)+ \log(1/\beta)\right)\\
&\leq \alpha/3.\label{ineq:emp-errors}
\end{align}
Given the bound on $n'$ and the fact that $S'\sim\cD^{n'}$, by a standard uniform convergence argument from learning theory \ifthenelse{\alt=1}{\citep{shalev2014understanding}}{\cite{shalev2014understanding}}, we have the following generalization error bounds. With probability $\geq 1-2\beta/3,$ we have:
\begin{align}
    \lvert\err(\whh; \cD)-\herr(\whh; S')\rvert&\leq \alpha/3,\label{ineq:gen-err-whh}\\
    \lvert\err\left(h_{S'}^\erm; \cD\right)-\herr\left(h_{S'}^\erm; S'\right)\rvert&\leq \alpha/3\label{ineq:gen-err-herm}
\end{align}
Putting (\ref{ineq:emp-errors})-(\ref{ineq:gen-err-herm}) together, we conclude that w.p. $\geq 1-\beta,$ we have $\err\left(\whh; \cD\right)- \err\left(h_{S'}^\erm; \cD\right)\leq \alpha.$ This completes the proof.
%\rnote{fixed the proof.}
%\rnote{derive the bound without O() notation. The lemma statement gives an exact $\alpha$ bound}
%\widehat{\dis}(\whh,h_{S'}^\erm;S_u) \leq 
% Note that $S_u$ is the unlabeled version of $S'$, and $\widehat{\dis}(\whh,h_{S'}^\erm;S_u)=\widehat{\err}(\whh,S') - \widehat{\err}(h_{S'}^\erm,S')$. \rnote{This equality between empirical disgreement and difference between the empirical errors is incorrect.}Hence, w.p. $\geq 1-\beta$, $\widehat{\dis}(\whh,h_{S'}^\erm;S_u) \leq \alpha$. 
\end{proof}}

\section{Constructions from previous works}\label{apndx:subsamp}

\subsection{Description of Algorithm $\cA_\samp$}

For completeness, here we briefly describe the algorithm $\cA_\samp$ (Algorithm~\ref{Alg:binClas} below) \ifthenelse{\alt=1}{due to}{in} \cite{bassily2018model}. The input to $\cA_\samp$ is a private labeled dataset $S=\{(x_1,y_1),\ldots,(x_n,y_n)\}$, an online sequence of classification queries $Q = (\tx_1, \ldots, \tx_m)$, and a generic non-private PAC learner $\cB$ for a hypothesis class $\cH$. The algorithm outputs a sequence of private labels $(y_1^\prv,\dots,y_m^\prv)$. The key idea in $\A_\samp$ is as follows: first, it arbitrarily splits $S$ into $k$ equal-sized sub-samples $S_1,\dots,S_k$ for appropriately chosen $k$. Each of those sub-samples is used to train $\cB$. Hence, we obtain an ensemble of $k$ classifiers $h_{S_1},\cdots,h_{S_k}$. Next for each input query $\tx_i \in Q$, the votes $(h_{S_1}(\tx_i), \ldots, h_{S_k}(\tx_i))$ are computed. It then applies the distance-to-instability test \ifthenelse{\alt=1}{\citep{ST13}}{\cite{ST13}} on the difference between the largest count of votes and the second largest count. 
If the majority vote is sufficiently stable, $\cA_{\samp}$ returns the majority vote as the predicted label for $\tx_i$; otherwise, it returns a random label. The sparse-vector framework is employed to efficiently manage the privacy budget over the $m$ queries. In particular, by employing the sparse-vector technique, the privacy budget of $\cA_{\samp}$ is only consumed by those queries where the majority vote is not stable. Algorithm $\cA_{\samp}$ takes an input cut-off parameter $T$, which represents a bound on the total number of ``unstable queries'' the algorithm can answer before it halts in order to ensure $(\epsilon,\delta)$-differential privacy. 

%This parameter $T$ will be set to $\approx 3m\alpha$ since one can show that, with high probability, the total number of unstable queries is . 
%\annote{change utility to accuracy everywhere}

%{\color{blue} Algorithm $\cA_{\samp}$ splits the input training set into $k$ chunks (for appropriately chosen $k$) and uses each of those chunks to train $\cB$, which would in turn outputs an ensemble of good classifiers (hypotheses) $h_1, \ldots, h_k \in \{0, 1\}^{\cX}$. For each input query $\tx_j$, the votes $(h_1(\tx_j), \ldots, h_k(\tx_j))$ are computed, then some strong notion of stability for the majority vote is tested (as in the distance-to-instability framework \cite{ST13}).
 	\begin{algorithm}[H]
 		\caption{$\A_\samp$ \cite{bassily2018model}: Private Classification via subsample-aggregate and sparse-vector}
 		\begin{algorithmic}[1]
 			\REQUIRE Private dataset: $S$, ~ upper bound on the number of queries: $m$, ~online sequence of classification queries: $\cQ=\{\tx_1,\ldots,\tx_m\}$, ~ hypothesis class $\cH$, ~oracle access to a PAC learner of $\cH$: $\cB_\pac$, ~unstable query cutoff: $T$, ~privacy parameters: \mbox{$\epsilon,\delta >0$}, ~failure probability: $\beta$.
 			\STATE $c \leftarrow 0, ~\lambda \leftarrow \frac{\sqrt{32T\log(2/\delta)}}{\epsilon}$ and $k \leftarrow 34\sqrt{2}\lambda\cdot\log\left(4mT/\min\left(\delta, \beta/2\right)\right)$ \label{Stp:kAssn}
 			\STATE $w \leftarrow 2\lambda\cdot \log(2m/\delta),~ \hw \leftarrow w + \sf Lap(\lambda)$  ~~\COMMENT{$\sf Lap(b)$ denotes the Laplace distribution with scale $b$} 
			\STATE Split $S$ into $k$ non-overlapping sub-samples $S_1,\cdots, S_k$. 
 			\FOR{$j \in [k]$} \STATE $h_{S_j} \leftarrow \cB_\pac(S_{j})$
 			\ENDFOR
			\FOR{$i\in[m]$ and $c \leq T$}
 			\STATE $\cF_i\leftarrow\{h_{S_1}(x_i),\cdots,h_{S_k}(x_i)\}$ \COMMENT{For every $y \in \{0,1\}$, let ${\sf ct}(y)=\#$ times $y$ appears in $\cF_i$.} 
			\STATE $\widehat{q}_{x_i} \leftarrow \arg\max\limits_{y \in \{0,1\}}\left[{\sf ct}(y)\right]$, ~~$\dist_{\widehat{y}_{x_i}} \leftarrow$ largest ${\sf ct}(y)$ - second largest ${\sf ct}(y)$  %\max\left\{0, \sf ct(\widehat{q}_{x_i})-\sf ct(1-\widehat{q}_{x_i})-1\right\}$
			\STATE $y_i^\prv \leftarrow \cA_\stab(S,\widehat{q}_{x_i},\dist_{\widehat{y}_{x_i}},\hw,\frac{1}{2\lambda})$ \COMMENT {Stability test for $\widehat{q}_{x_i}$, given by Algorithm~\ref{Alg:kStab} below.}
			\STATE {\bf if} $y_i^\prv=\bot$, {\bf then} $c \leftarrow c + 1$, ~$\hw \leftarrow w + \sf Lap(\lambda)$
			\STATE Output $y_i^\prv$
			\ENDFOR 
		\end{algorithmic}
		\label{Alg:binClas}
	\end{algorithm}
	
\begin{algorithm}
	\caption{$\A_{\stab}$ \cite{ST13}: Private estimator for $f$ via distance to instability}
	\begin{algorithmic}[1]
		\REQUIRE Dataset: $S$, function: $f:U^n\to\R$, distance to instability: $\dist_f:U^n\to\mathbb{R}$, threshold: $\thr$, privacy parameter: $\epsilon>0$
		\STATE $\wds\leftarrow\dist_f(S)+{\sf Lap}\left(1/\epsilon\right)$
		\STATE {\bf if} $\wds > \thr$, {\bf then} output $f(S)$, {\bf else} output $\bot$ 
	\end{algorithmic}
	\label{Alg:kStab}
\end{algorithm}

\subsection{Description of Algorithm $\cA_\sspp$}

In Section~\ref{sec:privLearn}, we use a semi-supervised semi-private learner construction from \ifthenelse{\alt=1}{\citep{ABM19}}{\cite{ABM19}} (referred to as $\cA_\sspp$) to give a construction for a universal $\pcqr$ algorithm that can answer any number of classification queries (Algorithm~\ref{Alg:UnivSeqClass}). %where they show that any hypothesis class $\cH$ can be semi-privately learned in the agnostic setting with $\approx \vc(\cH)/\alpha$ public unlabeled samples. 
For completeness, we describe the construction of this semi-private learner $\cA_\sspp$ in Algorithm~\ref{Alg:exp-net} below\footnote{A similar construction of the semi-private learner $\A_\sspp$ has also appeared in the earlier work by \ifthenelse{\alt=1}{}{Beimel et al. in}\cite{beimel2013private}.}. %, where they show that any hypothesis class $\cH$ can be semi-privately learned in the agnostic setting with $\approx \vc(\cH)/\alpha$ public unlabeled samples. 
Algorithm~\ref{Alg:exp-net} takes as input two datasets: a private dataset $S$ of size $n$, and an unlabeled public dataset $\Tpub$ of size $m_o$, and outputs a hypothesis $h_\prv : \cX \rightarrow \{0,1\}$. The main idea of the construction in \ifthenelse{\alt=1}{\citep{ABM19}}{\cite{ABM19}} is that the public unlabeled dataset can be used to create a finite $\alpha$-cover for $\cH$ (see Definition~\ref{def:acover} below), and hence, reducing the task of privately learning $\cH$ to the task of learning a finite sub-class of $\cH$ (the $\alpha$-cover).

\begin{defn}[$\alpha$-cover for a hypothesis class]\label{def:acover}
A family of hypotheses $\wtH$ is said to form an $alpha$-cover for a hypothesis class $\cH \subseteq \{0,1\}^{\cX} $ with respect to distribution $\cD_\cX$ if for every $h \in \cH$ there exists a $\tlh \in \wtH$ such that $\ex{x \sim \cD_\cX}{\ind(h(x) \neq \tlh(x)} \leq \alpha$. 
\end{defn}
\begin{algorithm}[H]
	\caption{$\cA_\sspp$ \cite{ABM19}: Semi-Supervised Semi-Private Agnostic Learner}
	\begin{algorithmic}[1]
		\REQUIRE Private labeled dataset: $S \in U^n$, a public unlabeled dataset: $\Tpub=(\tx_1,\cdots, \tx_{m_o})\in \cX^{m_o}$, ~~\mbox{a hypothesis} class $\cH\subset \{0, 1\}^{\cX}$, and a privacy parameter $\epsilon >0$.
		\STATE Let $\tT=\{\hx_1, \ldots,\hx_{\hatm}\}$ be the set of points $x\in\cX$ appearing at least once in $\Tpub$.
		\STATE Let $\Pi_{\cH}(\tT)=\left\{\left(h(\hx_1), \ldots, h(\hx_{\hatm})\right):~h\in\cH\right\}.$
		\STATE Initialize $\tH_{\Tpub}=\emptyset$.\label{step:init}
		\FOR{each $\bc=(c_1, \ldots, c_{\hatm})\in\Pi_{\cH}(\tT)$:}
		\STATE Add to $\tH_{\Tpub}$ arbitrary $h\in\cH$ that satisfies $h(\hx_j)=c_j$ for every $j=1,\ldots, \hatm$.  \label{step:rep-hyp}
		\ENDFOR
		\STATE Use the exponential mechanism with inputs $S,~\tH_{\Tpub}, \eps,$~ and score function $q(S, h)\triangleq -\herr(h; S)$ to select $\hpv\in \tH_{\Tpub}$. \label{step:exp-mech} 
		\RETURN $\hpv.$
		
	\end{algorithmic}
	\label{Alg:exp-net}
\end{algorithm}

\end{document}